%% file: icml2026.tex
\documentclass{article}
\usepackage{microtype}
\usepackage{graphicx}
\usepackage{subcaption}
\usepackage{booktabs} 
\usepackage{hyperref}
\makeatletter
\g@addto@macro\UrlBreaks{\do\-\do\_}
\makeatother
\usepackage[preprint]{icml2026}
\usepackage{amsmath}
\usepackage{amssymb}
\usepackage{mathtools}
\usepackage{amsthm}
\usepackage[capitalize,noabbrev]{cleveref}
\theoremstyle{plain}
\newtheorem{theorem}{Theorem}[section]
\newtheorem{proposition}[theorem]{Proposition}

\theoremstyle{definition}

\theoremstyle{remark}

\usepackage[textsize=tiny]{todonotes}

\usepackage[table]{xcolor}
\usepackage{tabularx}
\usepackage{array}
\definecolor{rowgray}{gray}{0.93}
\newcolumntype{Y}{>{\centering\arraybackslash}X}

\usepackage{caption}
\usepackage{listings}
\usepackage[most]{tcolorbox}
\tcbuselibrary{listings,breakable,skins}

\lstdefinestyle{icmlpy}{
  language=Python,
  basicstyle=\ttfamily\small,
  columns=fullflexible,
  keepspaces=true,
  showstringspaces=false,
  breaklines=true,
  breakatwhitespace=false,
  tabsize=4
}

\newtcblisting{icmlcodebox}{
  listing only,
  listing options={style=icmlpy},
  colback=black!3,
  colframe=black!55,
  boxrule=0.6pt,
  arc=3mm,
  left=6pt,
  right=6pt,
  top=6pt,
  bottom=6pt,
  boxsep=0pt,
  breakable
}

\icmltitlerunning{Beyond Variance: Prompt-Efficient RLVR via Rare-Event Amplification and Bidirectional Pairing}

\begin{document}

\twocolumn[
   \icmltitle{Beyond Variance: Prompt-Efficient RLVR via Rare-Event Amplification and Bidirectional Pairing}

  \begin{icmlauthorlist}
    \icmlauthor{Yujuan Pang}{yyy}
    \icmlauthor{Jiaxin Li}{comp}
    \icmlauthor{Xin Sheng}{yyy}
    \icmlauthor{Ran Peng}{comp}
    \icmlauthor{Yong Ma}{sch,qax}
  \end{icmlauthorlist}

  \icmlaffiliation{yyy}{Beijing University of Post and Telecommunications.}
  \icmlaffiliation{comp}{Sichuan Agricultural University.}
  \icmlaffiliation{sch}{QI-ANXIN Group.}
  \icmlaffiliation{qax}{RapidAI Research}

  \icmlcorrespondingauthor{Yong Ma}{}

  \icmlkeywords{Machine Learning, ICML}

  \vskip 0.3in
]



\begin{NoHyper}
\printAffiliationsAndNotice{}  
\end{NoHyper}

\input{sec/01_abstract}
\input{sec/02_introduction}
\input{sec/03_related_work}

\input{sec/04_method}

\input{sec/05_experimental_setup}
\input{sec/06_conclusion}

\input{sec/08_impact_statement}

\bibliography{icml2026}
\bibliographystyle{icml2026}

\input{sec/07_appendix}

\end{document}

%% file: sec/01_abstract.tex
\begin{abstract}
Reinforcement learning with verifiable rewards (RLVR) is effective for training large language models on deterministic outcome reasoning tasks. Prior work shows RLVR works with few prompts, but prompt selection is often based only on training-accuracy variance, leading to unstable optimization directions and weaker transfer. We revisit prompt selection from a mechanism-level view and argue that an effective minibatch should provide both (i) a reliable positive anchor and (ii) explicit negative learning signals from rare failures. Based on this principle, we propose \emph{positive--negative pairing}: at each update, we sample a hard-but-solvable $q^{+}$ and an easy-but-brittle prompt $q^{-}$(high success rate but not perfect), characterized by low and high empirical success rates under multiple rollouts. We further introduce Weighted GRPO, which reweights binary outcomes at the pair level and uses group-normalized advantages to amplify rare successes on $q^{+}$ into sharp positive guidance while turning rare failures on $q^{-}$ into strong negative penalties. This bidirectional signal provides informative learning feedback for both successes and failures, improving sample efficiency without suppressing exploration. On Qwen2.5-Math-7B, a single paired minibatch per update consistently outperforms a GRPO baseline that selects two prompts via commonly used variance-based selection heuristics: AIME~2025 Pass@8 improves from 16.8 to 22.2, and AMC23 Pass@64 from 94.0 to 97.0, while remaining competitive with large-scale RLVR trained from a pool of 1209 training prompts. Similar gains are observed on Qwen2.5-Math-7B-Instruct.
\end{abstract}

%% file: sec/02_introduction.tex
\section{Introduction}
Large language models (LLMs) have recently achieved substantial progress in mathematical reasoning, demonstrating strong performance on challenging benchmark problems \citep{guo2025deepseekr1,team2025kimik15}. A key driver behind this progress is reinforcement learning with verifiable rewards (RLVR), which trains models on deterministic-outcome tasks using rewards computed by automatic verifiers rather than subjective human judgments \citep{jaech2024o1, lambert2024tulu3, gao2024rewardreasoning,team2025kimik15}. In mathematical reasoning, verifier feedback is typically outcome-based and unambiguous, and is often binary, e.g., $r\in\{0,1\}$ (incorrect/correct) or equivalently $r\in\{-1,1\}$ depending on the implementation. This setup reduces reward hacking \citep{miao2024inform, cai2025verifiablenoisy} and avoids large-scale human labeling or training a separate reward model \citep{li2025processqvalue, zhang2025processlessons}.

Despite these advantages, training-prompt selection for RLVR remains poorly understood, especially in the very-low-data regime \citep{li2024numinamath,luo2025deepscaler, yu2025dapo}. While prior work has curated high-quality datasets and prompt collections for mathematical reasoning, it is still unclear which prompts should be used for RLVR when only a handful of prompts are feasible. Fundamental questions remain open: how much data is actually needed, which prompts matter most, and how prompt quality and quantity shape RLVR outcomes. A closely related effort, LIMR \citep{li2025limr}, proposes learning impact measurement to score prompts and shows that performance can be largely maintained while shrinking the RLVR prompt set by about sixfold, but does not characterize how far such compression can be pushed before performance breaks down. More recently, RLVR with a single training prompt shows that substantial gains can arise from extremely few prompts \citep{wang2025onetrainingexample}. In that setting, prompts are ranked by a simple heuristic: the historical variance of training accuracy, yet the authors emphasize that this criterion is not necessarily optimal and that many moderate-/low-variance prompts can perform comparably well. In parallel, a growing body of work highlights that explicitly penalizing incorrect trajectories can be surprisingly effective: it suppresses wrong generations and redistributes probability mass toward other plausible solutions under the model prior \citep{zhu2025negativeRLVR,yang2025negativesampleaugmentation,chen2025sgpo,feng2025dontwastemistakes,arnal2025asymmetricreinforce}.

Motivated by these observations, one open problem is that: \emph{How can we select training prompts for RLVR to maximally improve a base model's mathematical reasoning while using as few prompts as possible?}

\paragraph{A mechanism-level view: bidirectional teaching signals from tail events.}
Our starting point is a simple but consequential observation about RLVR under sparse binary rewards: policy-gradient updates can be dominated by \emph{rare tail events}: occasional successes on hard prompts or occasional failures on easy prompts. In the very-low-data regime, these rare events may or may not occur in a given minibatch, making the update direction sensitive to the particular samples drawn, leading to unstable update directions. This suggests that effective prompt selection should ensure that each update contains (i) a hard-but-solvable positive anchor that provides a clear “do” signal, and (ii) an easy-but-brittle prompt whose rare failures provide an explicit “don’t” signal, rather than selecting prompts solely by hardness or training-accuracy variance. Concretely, rare successes on a hard-but-solvable prompt provide sharp positive teaching signals, while rare failures on an easy-but-brittle prompt provide sharp ``do-not'' signals. Pairing these two regimes concentrates learning on informative tail events.

\paragraph{Main finding: two prompts can be sufficient.}
Guided by this mechanism, we find that in the extreme low-data setting, RLVR can be highly effective with only two carefully chosen prompts: one hard-but-solvable prompt and one easy-but-brittle prompt. Here, hard-but-solvable means the prompt has a low-but-nonzero success rate under a fixed number of rollouts, so that correct solutions occur as rare events; easy-but-brittle means the prompt has a high (but not perfect) empirical success rate under the current policy, and thus produces occasional failures that provide explicit negative learning signals. Empirically, this two-prompt design consistently outperforms two-prompt baselines chosen by commonly used heuristics (e.g., variance-based selection), and recovers a substantial fraction of the gains obtained by training on much larger prompt pools (e.g., 1209 prompts) across multiple mathematical reasoning benchmarks.

\paragraph{Approach: positive--negative pairing with Weighted GRPO.}
To reliably instantiate these bidirectional signals with only two prompts, we propose \emph{positive--negative pairing}. At each update, we select (i) a \emph{positive anchor} $q^{+}$ in a low-but-nonzero success regime $p(q^{+})\in[1/G,c/G]$ so that rare successes are amplified into strong positive advantages, and (ii) a \emph{negative guidance} $q^{-}$ in a high-but-not-perfect success regime $p(q^{-})\in[1-c/G,1-1/G]$ so that rare failures are amplified into strong negative advantages (we use $G{=}8$). We further introduce \emph{Weighted GRPO} (WGRPO), which applies group-normalized advantages to weighted binary outcomes and thereby implements \emph{rare-event amplification}: it upweights rare successes when $p$ is small and upweights rare failures when $p$ is large, while avoiding degenerate all-correct/all-wrong groups with collapsed within-group variance. In practice, we instantiate pairing via a lightweight probing stage on a structured candidate pool: we draw the negative-guidance pool from DeepScaleR-sub (typically easier) and the positive-anchor pool from AIME~2025 (typically harder), estimate success rates under the current policy, discard near-0/near-1 candidates, and select $q^{+},q^{-}$ by targeting $p_{\text{hard}}\approx 1/G$ and $p_{\text{easy}}\approx 1-1/G$.

Our main contributions are:
\begin{itemize}
  \item \textbf{Bidirectional prompt selection via positive--negative pairing.}
  We provide a mechanism-level view of prompt selection in low-data RLVR and propose a minimal two-prompt design: one easy-but-brittle prompt that yields rare failures (a strong ``do-not" signal) and one hard-but-solvable prompt that yields rare successes (a strong ``do" signal). This pairing provides complementary teaching signals that make each update more informative and less dominated by one-sided outcomes, improving prompt efficiency.

  \item \textbf{Weighted GRPO for rare-event amplification under binary rewards.}
  We introduce WGRPO, which reweights binary outcomes and applies group-normalized advantages to amplify rare successes on $q^{+}$ and rare failures on $q^{-}$, encouraging exploration while stabilizing update directions under sparse outcome rewards.

  \item \textbf{Consistent gains on mathematical reasoning benchmarks with only two prompts.}
  We evaluate on Qwen2.5-Math-7B and observe consistent Pass@$k$ improvements on MATH500, AIME~2025, and AMC23, with representative gains such as 16.8$\rightarrow$22.2 (AIME~2025 Pass@8) and 94.0$\rightarrow$97.0 (AMC23 Pass@64). Moreover, our bidirectional prompt selection recovers a substantial fraction of the gains achieved by large-scale RLVR trained on 1209 prompts, and surpasses it on AMC23 at larger $k$. Similar gains hold for Qwen2.5-Math-7B-Instruct.
\end{itemize}

\begin{figure*}[t]
  \centering
  \includegraphics[width=\textwidth]{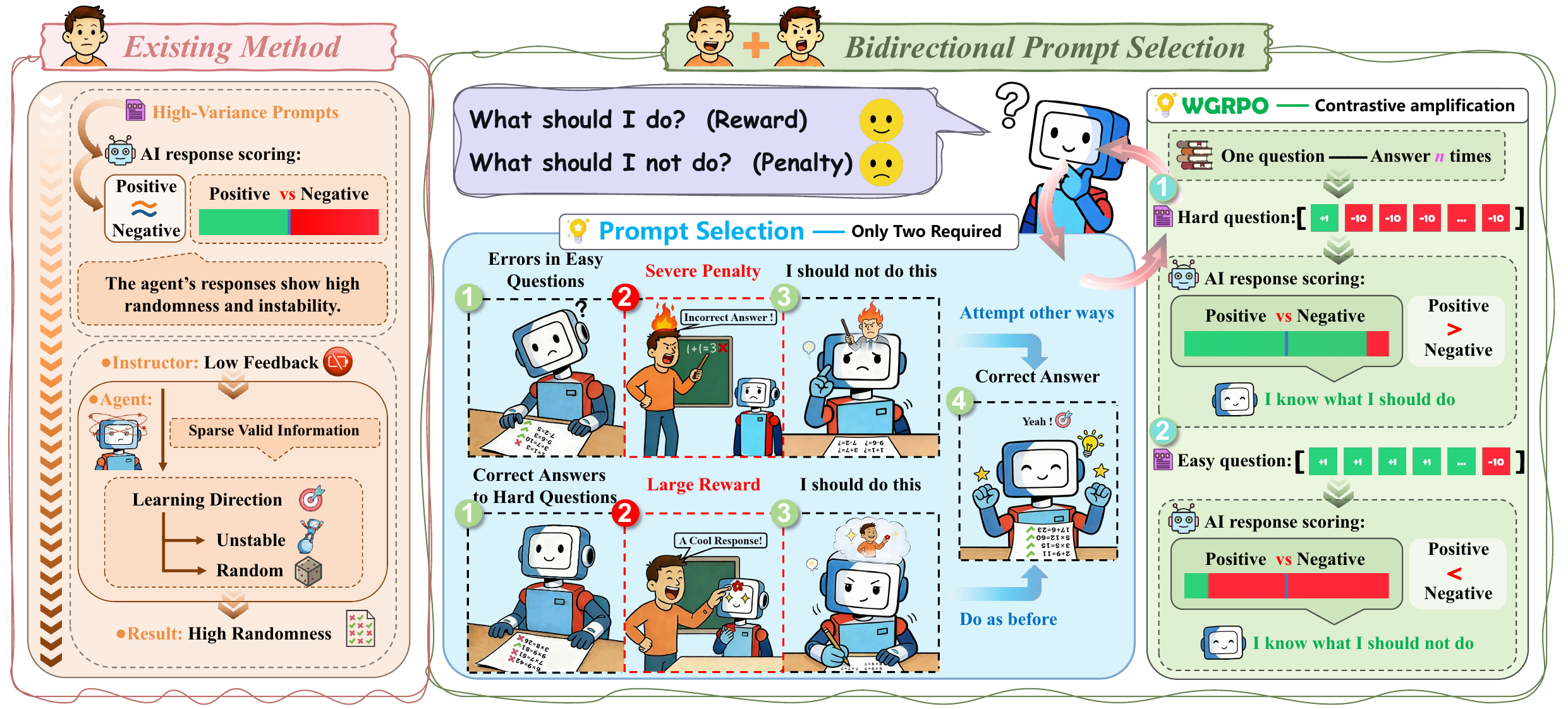}
  \caption{Overview of bidirectional prompt selection and WGRPO. A common low-data RLVR baseline is to prioritize ``high-variance'' prompts, which can be sensitive to sampling noise. We instead select a two-prompt positive--negative pair: a hard prompt where rare successes provide strong positive guidance, and an easy prompt where rare failures provide strong negative penalties. WGRPO contrastively amplifies these tail events across repeated rollouts, encouraging exploration while stabilizing update directions.}
  \label{fig:strategy}
\end{figure*}

%% file: sec/03_related_work.tex
\section{Related Work}

\paragraph{Reinforcement Learning with Verifiable Rewards (RLVR).}
RLVR improves LLM reasoning by using automatic verifiers to provide outcome-based rewards, avoiding human preference modeling and enabling scalable policy-gradient training. This paradigm is particularly effective for mathematical reasoning, where verification is often based on exact answer matching or other deterministic checks and the reward is typically binary~\citep{shao2024deepseekmath,primeintellect2025intellect2,xu2025elasticreasoning,wei2025truthrl,wang2025onetrainingexample,yu2025dapo,yuan2025vapo,zhang2025srpo}. 
A substantial body of recent work focuses on stabilizing and accelerating policy optimization under sparse verifiable rewards, including PPO-style objectives and credit-assignment refinements~\citep{schulman2017ppo,kazemnejad2024vineppo,yuan2025ppocollapse,yuan2025vapo,li2025turnppo}, as well as GRPO-style optimization and guided/regularized variants~\citep{liu2025understandingr1zero,zhang2025srpo,yu2025dapo,chen2025sgpo}. 
Overall, these methods primarily refine advantage/value estimation and optimization dynamics, providing a strong algorithmic foundation for exploiting verifiable reward signals.

In contrast, we focus on a complementary question that has received less systematic, mechanism-level study: \emph{where reliable learning signals come from in RLVR when the number of training prompts is extremely limited}. 
In the very-low-data regime, RL updates can be highly sensitive to sampling noise, making it crucial to identify prompts that yield consistent and directional gradient signals. 
We provide a mechanism-level account of prompt usefulness and show that deliberately pairing prompts can induce bidirectional supervision: one prompt produces rare successes that serve as positive anchors, while the other produces rare high-confidence failures that provide negative warnings.

\paragraph{Negative reinforcement and suppressing incorrect trajectories.}
A growing line of work argues that RLVR can benefit substantially from emphasizing negative learning signals, e.g., by penalizing incorrect trajectories or explicitly augmenting negative samples. Such approaches can suppress wrong generations and redistribute probability mass toward other plausible solutions under the model prior~\citep{zhu2025negativeRLVR,yang2025negativesampleaugmentation,chen2025sgpo,feng2025dontwastemistakes,arnal2025asymmetricreinforce}. 
Our method is related but distinct: rather than strengthening only the ``do-not'' signal, we construct a paired training unit that yields both ``do'' and ``do-not'' supervision within each update. Concretely, we combine an easy-but-brittle prompt (inducing rare failures) with a hard-but-solvable prompt (inducing rare successes), and implement the resulting bidirectional learning through WGRPO. In our experiments, this design improves directional consistency and training stability while preserving exploration.

\paragraph{Data Selection for LLM Post-Training.}
Data selection is a well-established topic in LLM post-training~\citep{ivison2025largescale}, with most work centered on supervised fine-tuning.
Common approaches include model-based filtering for quality~\citep{chen2024alpagasus}, selecting prompts using training-time signals~\citep{ivison2023dataefficient}, and gradient-related criteria for identifying influential data~\citep{xia2024less}.
Related lines in RLHF study how to select or refine preference data to reduce annotation cost while improving alignment outcomes~\citep{muldrew2024active,liu2024enabling,das2024active}.

Within RLVR, recent studies begin to ask which prompts are worth spending RL updates on.
LIMR shows that smaller curated RL prompt sets can match or outperform naive scaling, emphasizing data quality~\citep{li2025limr}.
One-shot RLVR further demonstrates that even a single carefully chosen prompt can improve reasoning and proposes selecting high-variance prompts~\citep{wang2025onetrainingexample}.
While effective, such heuristics can be setting-dependent and do not explicitly distinguish between prompts that provide stable positive anchors versus stable negative warnings.
Our work complements this line by proposing a mechanism-grounded minimal training design: positive--negative pairing instantiated with WGRPO that deliberately induces both positive and negative learning signals, making RLVR updates more interpretable and stable in the extreme low-data regime. We further show that this pairing criterion generalizes across models and benchmarks, indicating a transferable method for selecting informative prompts rather than an idiosyncratic heuristic.

%% file: sec/04_method.tex
\section{Method}
\paragraph{Overview.}
Our method has two components.
(i) \textbf{WGRPO} maps binary outcome feedback into \emph{weighted signed} outcomes and then applies \emph{group normalization} across $G$ rollouts for the same prompt, which automatically amplifies rare but informative events.
(ii) \textbf{Positive--Negative Pairing} chooses \emph{two} training prompts per update: a hard-but-solvable prompt whose \emph{rare successes} act as positive anchors, and an easy-but-brittle prompt whose \emph{rare failures} act as negative guidance.

\subsection{Mechanism: Rare-Event Amplification under Group-Normalized Weighted Outcomes}
\label{sec:mechanism}

\paragraph{Setup.}
Given a prompt $q$, GRPO samples a group of $G$ responses $\{o_i\}_{i=1}^G \sim \pi_{\theta_{\text{old}}}(\cdot\mid q)$ from the old policy.
Let $\{r_{i,t}\}_{t=1}^T$ be token-level rewards ($T$ is the maximum response length) and $\{m_{i,t}\}_{t=1}^T$ the end-of-sequence (EOS) mask over valid tokens.
We aggregate rewards into a scalar score
$
s_i^{\text{raw}}=\sum_{t=1}^{T} r_{i,t}\, m_{i,t}.
$
We focus on outcome-level RLVR: $r_{i,t}=0$ for non-terminal tokens and equals the outcome reward at EOS (equivalently, broadcast over valid tokens), so $s_i^{\text{raw}}$ reduces to an outcome score.
This notation also covers signed (e.g., $\{-1,+1\}$) and binary (e.g., $\{0,1\}$) rewards.

\paragraph{Weighted binary outcomes.}
We map each trajectory to a weighted binary outcome
\begin{equation}
y_i \;=\;
\begin{cases}
+1, & s_i^{\text{raw}}>\tau,\\
-\lambda_{\text{neg}}, & \text{otherwise},
\end{cases}
\label{eq:wgrpo_map_rewrite}
\end{equation}
where $\tau$ is a task-dependent threshold (e.g., $\tau=0$ for signed rewards, $\tau=0.5$ for $\{0,1\}$ rewards). We use $\tau=0$ in this paper.
$\lambda_{\text{neg}}>0$ controls the relative magnitude of the penalty assigned to incorrect trajectories prior to group normalization.
After normalization, the advantage geometry is primarily governed by the empirical group success rate, while $\lambda_{\text{neg}}$ mainly affects gradient scaling in finite-precision implementations and interacts with $\varepsilon_{\text{std}}$ when the group is near-degenerate or when the underlying outcome signal is not strictly binary. See Appendix~\ref{app:degenerate} for details on degenerate/near-degenerate groups and the effect of $\lambda_{\mathrm{neg}}$.

\paragraph{Group-normalized outcome advantages.}
Following GRPO, we compute the group-wise mean and standard deviation for prompt $q$:
\[
\mu_q \;=\; \frac{1}{G}\sum_{i=1}^G y_i,\qquad
\sigma_q \;=\; \sqrt{\frac{1}{G}\sum_{i=1}^G (y_i-\mu_q)^2}.
\]
We then broadcast the normalized outcome to tokens and apply the EOS mask:
\begin{equation}
A_{i,t}^{\text{WGRPO}} \;=\; \frac{y_i-\mu_q}{\sigma_q+\varepsilon_{\text{std}}}\; m_{i,t},
\qquad t=1,\dots,T,
\label{eq:wgrpo_adv_rewrite}
\end{equation}
where $\varepsilon_{\text{std}}>0$ is a small constant for numerical stability when $\sigma_q$ is close to $0$.

\paragraph{Optimization objective.}
We plug $A_{i,t}^{\text{WGRPO}}$ into the standard clipped GRPO objective and add a KL penalty:
\begin{multline}
\label{eq:wgrpo_clip_rewrite}
\mathcal{L}_{\mathrm{clip}}(\theta)
= -\frac{1}{G}\sum_{i=1}^{G}\frac{1}{T}\sum_{t=1}^{T}
\min\Big(
\rho_{i,t}(\theta)\,A^{\mathrm{WGRPO}}_{i,t},\\
\operatorname{clip}\big(\rho_{i,t}(\theta),\,1-\varepsilon_{\text{clip}},\,1+\varepsilon_{\text{clip}}\big)\,
A^{\mathrm{WGRPO}}_{i,t}
\Big),
\end{multline}
\begin{equation}
\mathcal{L}_{\mathrm{WGRPO}}(\theta)
= \mathcal{L}_{\mathrm{clip}}(\theta)
+ \beta\,\mathrm{KL}\big(\pi_\theta\,\|\,\pi_{\mathrm{ref}}\big),
\label{eq:wgrpo_obj_rewrite}
\end{equation}
where $\rho_{i,t}(\theta)=\frac{\pi_{\theta}(o_{i,t}\mid q, o_{i,<t})}{\pi_{\theta_{\mathrm{old}}}(o_{i,t}\mid q, o_{i,<t})}$ is the standard token-level likelihood ratio, $\varepsilon_{\text{clip}}$ is the clipping parameter, and $\beta>0$ is the KL coefficient.
The KL term is added outside the clipped minimum as a separate penalty. We adopt the common approximate KL formulation \citep{wang2025onetrainingexample} used widely in prior RLVR works \citep{shao2024deepseekmath,wang2025onetrainingexample,guo2025deepseekr1}.

\paragraph{Rare-event amplification induced by group normalization.}
Let $k$ be the number of correct responses in a group of size $G$ and $p=k/G$ be the empirical group success rate.
For the non-degenerate case $0<k<G$, the group statistics admit closed forms:
\begin{equation}
\mu_q = (1+\lambda_{\text{neg}})p - \lambda_{\text{neg}},
\label{eq:closed_form_mean_rewrite}
\end{equation}
\begin{equation}
\sigma_q = (1+\lambda_{\text{neg}})\sqrt{p(1-p)}.
\label{eq:closed_form_std_rewrite}
\end{equation}
The proof is provided in Appendix~\ref{app:closed_form}. Thus, the normalized advantages for correct and incorrect trajectories become
\begin{equation}
\label{eq:adv_with_eps_rewrite}
\begin{aligned}
A^{+}
&=
\frac{(1+\lambda_{\mathrm{neg}})(1-p)}
{(1+\lambda_{\mathrm{neg}})\sqrt{p(1-p)}+\varepsilon_{\text{std}}}, \\
A^{-}
&=
\frac{-(1+\lambda_{\mathrm{neg}})p}
{(1+\lambda_{\mathrm{neg}})\sqrt{p(1-p)}+\varepsilon_{\text{std}}}.
\end{aligned}
\end{equation}
When $\varepsilon_{\text{std}}$ is small relative to $(1+\lambda_{\text{neg}})\sqrt{p(1-p)}$, the geometry is dominated by $p$:
rare successes ($p\ll 1$) receive large positive advantages, while rare failures ($p\approx 1$) receive large negative advantages.
This yields an automatic \emph{rare-event amplification} effect driven by group normalization rather than an explicit curriculum.

\paragraph{Example with $G=8$.}
With $G=8$ and $\varepsilon_{\text{std}}\!\ll\!\sqrt{p(1-p)}$, the induced advantage geometry depends mainly on $p$:
hard groups with $p=1/8$ yield $A^+\approx 2.65$ and $A^-\approx -0.38$,
while easy groups with $p=7/8$ yield $A^+\approx 0.38$ and $A^-\approx -2.65$.
This illustrates the core mechanism: hard prompts amplify rare successes (positive anchors), whereas easy prompts amplify rare failures (negative guidance), producing an adaptive curriculum without explicit difficulty heuristics.

\subsection{Prompt Selection via Positive--Negative Pairing}
\label{sec:instance_selection}

\paragraph{Core idea.}
Instead of selecting training prompts solely by a single heuristic measure (e.g., historical-accuracy variance), we explicitly construct a bidirectional minibatch consisting of (i) one prompt that yields a stable positive anchor and (ii) one prompt that yields a stable negative warning.
Concretely, we select a two-prompt training set
$
\mathcal{D}_{\pm}=\{q^{+}, q^{-}\},
$
where $q^{+}$ is \emph{hard-but-solvable} (rare successes exist) and $q^{-}$ is \emph{easy-but-brittle} (rare failures exist).
Under WGRPO, these two regimes map directly to amplified tail-event teaching signals (Sec.~\ref{sec:mechanism}).

\paragraph{Positive anchor: hard-but-solvable.}
We choose $q^{+}$ such that the current policy achieves a low but non-zero success rate:
\begin{equation}
p(q^{+}) \in \Big[\tfrac{1}{G}, \tfrac{c}{G}\Big],
\label{eq:positive_anchor_p_rewrite}
\end{equation}
so that $0<k<G$ and each group typically contains a small number of correct rollouts. $c$ controls the width of the target success-rate regimes used to identify $q^+$ and $q^-$
In this regime, WGRPO assigns large positive advantages to rare correct trajectories, concentrating updates on demonstrations of what the model \emph{should} do.

\paragraph{Negative guidance: easy-but-brittle.}
We choose $q^{-}$ such that the current policy achieves a high but not perfect success rate:
\begin{equation}
p(q^{-}) \in \Big[1-\tfrac{c}{G},\, 1-\tfrac{1}{G}\Big],
\label{eq:negative_anchor_p_rewrite}
\end{equation}
so that failures are rare but still occur.
In this regime, WGRPO assigns large-magnitude negative advantages to rare failures, producing a sharp ``do-not'' signal that suppresses failure modes while preserving alternative plausible solutions under the model prior.

\paragraph{Practical selection via lightweight probing.}
To instantiate positive--negative pairing with only two training prompts, we perform a simple probing stage on two candidate pools with different expected difficulty under the same base model.
We use an ``easy'' candidate pool $\mathcal{C}^{-}$ and a ``hard'' candidate pool $\mathcal{C}^{+}$; in our experiments $\mathcal{C}^{-}$ is drawn from DeepScaleR-sub and $\mathcal{C}^{+}$ is drawn from AIME 2025, but the procedure is agnostic to the specific sources.
For each candidate $q \in \mathcal{C}^{+} \cup \mathcal{C}^{-}$, we estimate its success rate under the current policy by sampling $M$ independent groups of size $G$ and averaging:
\[
\bar p(q) \;=\; \frac{1}{{M}}\sum_{m=1}^{M} \hat p_m(q).
\]
To ensure non-degenerate within-group variance, we discard candidates with
$
\bar p(q)\notin[\delta,1-\delta],
$
where we use $\delta=1/G$ by default.
We then select one positive anchor and one negative guidance prompt by targeting the two WGRPO regimes:
\begin{equation}
\begin{aligned}
q^{+} &= \arg\min_{q \in \mathcal{C}^{+}}
\left|\bar p(q) - p_{\mathrm{hard}}\right|, \\
q^{-} &= \arg\min_{q \in \mathcal{C}^{-}}
\left|\bar p(q) - p_{\mathrm{easy}}\right|,
\end{aligned}
\label{eq:two_example_selection_rewrite}
\end{equation}
where $p_{\mathrm{hard}} \approx 1/G$ and $p_{\mathrm{easy}} \approx 1 - 1/G$.
This ensures that $q^{+}$ operates in a low-but-nonzero success regime that amplifies rare successes, while $q^{-}$ operates in a high-but-not-perfect success regime that amplifies rare failures.
Overall, the selection is deliberately simple, uses only on-policy probing (no historical training statistics), and directly instantiates the rare-event amplification mechanism of WGRPO with only two training prompts.

%% file: sec/05_experimental_setup.tex
\section{Experimental Setup}
\label{sec:experimental_setup}

\paragraph{Models.}
To study how different training-prompt selection strategies affect RLVR, we run our training pipeline on several representative open-weight LLMs from different models.
In particular, we train Qwen2.5-Math-7B and Qwen2.5-Math-7B-Instruct.

\paragraph{Training dataset.}
The training prompts we select come from \textsc{AIME 2025}~\citep{aops2025aime} and DeepScaleR-sub~\citep{wang2025onetrainingexample}.
DeepScaleR-sub is a randomly sampled subset of 1209 training prompts from the DeepScaleR-Preview-Dataset~\citep{luo2025deepscaler}.
All rewards are outcome-based verifiable rewards computed by exact-answer checking.

\paragraph{Prompt selection.}
Baseline: high-variance. We follow the primary setting in~\citet{wang2025onetrainingexample} and select the two prompts with the highest historical training variance from DeepScaleR-sub, denoted as $\pi_{1}$ and $\pi_{2}$.
We train on $\{\pi_{1}, \pi_{2}\}$ using GRPO. Our method: bidirectional prompt selection. We follow the prompt selection procedure described in Sec.~\ref{sec:instance_selection} and likewise select two training prompts, denoted as $\pi_{1209}$ and $p_{12}$.
Among them, $\pi_{1209}$ is drawn from DeepScaleR-sub and corresponds to an easy-but-not-perfect prompt, while $p_{12}$ is drawn from \textsc{AIME 2025} and corresponds to a hard-but-solvable prompt.
Selection is performed once before RLVR training and kept fixed throughout training.
We train on $\{\pi_{1209}, p_{12}\}$ using WGRPO.

\paragraph{Probing overhead.}
The probing step required by our selection (estimating success rates under the current policy) is executed once prior to RLVR training using a lightweight rollout budget; its overhead is small relative to the total RLVR training rollouts and does not change the training-time compute budgets used in our method comparisons.

\paragraph{Training setup}
All trainings follow the \texttt{verl} framework~\citep{sheng2024hybridflow}.
The main training hyperparameters are summarized in Table~\ref{tab:impl_hparams}.
For each update, we sample a fixed number of prompts $B=2$, and for each prompt we generate $G=8$ responses\footnotemark\footnotetext{Since \texttt{drop\_last=True} is used in the training dataloader of \texttt{verl}, the dataset must contain no fewer samples than the \texttt{batch\_size}. Moreover, positive and negative prompts are required to be equally represented to ensure balanced positive and negative training signals. In low-data RLVR settings, we replicate the selected prompts in a symmetric manner until the dataset size matches the \texttt{batch\_size}, and use the replicated set as a new dataset.}. For two-prompt training, the same two prompts are used at every update.
For GRPO+DSR-sub, we train the base model using 1209 prompts. We train Qwen2.5-Math-7B and Qwen2.5-Math-7B-Instruct for at most 500 steps. By default, we do not apply early stopping and train all methods to the same maximum-step budget for a given base model. More details are provided in Appendix~\ref{app:repro_earlystop}.

\begin{figure*}[t]
  \centering
  \includegraphics[width=\textwidth]{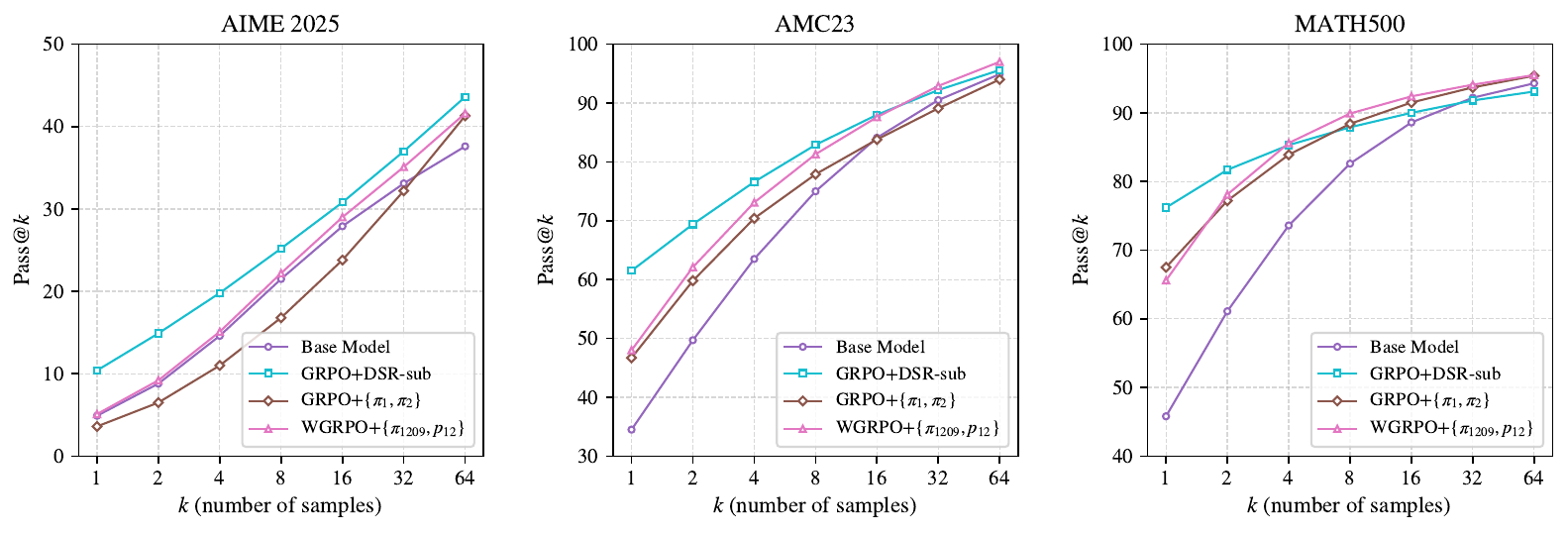}
  \caption{Pass@$k$ curves on \textsc{AIME 2025}, \textsc{AMC23}, and \textsc{MATH500} for Qwen2.5-Math-7B with Base Model, GRPO+DSR-sub, GRPO+$\{\pi_{1},\pi_{2}\}$, WGRPO+$\{\pi_{1209},p_{12}\}$.
  Except training on the large dataset (GRPO+DSR-sub), WGRPO+$\{\pi_{1209},p_{12}\}$ overall outperforms other methods across different $k$.
  On \textsc{AMC23} at $k=32,64$ and \textsc{MATH500} at $k=8,16,32,64$, WGRPO+$\{\pi_{1209},p_{12}\}$ shows the strongest performance.}
  \label{fig:passk_curves_qwen1}
\end{figure*}

\paragraph{Evaluation dataset and setup.}
We evaluate all trained models on \textsc{AIME 2025}, \textsc{AMC23}~\citep{aops_amc}, and \textsc{MATH500}~\citep{lightman2023verify}.
During prompt selection, one prompt $p_{12}$ from \textsc{AIME 2025} is used for training; we therefore remove $p_{12}$ from the evaluation set to avoid data leakage.
For evaluation, we set the maximum generation length to 3072 tokens.
We use the \texttt{qwen25-math-cot} prompt template for Qwen2.5-Math-7B and Qwen2.5-Math-7B-Instruct.
Unless otherwise specified, we set top\_p to 1, and use a temperature of 0.6; we use the same decoding setup for all methods.

To reduce evaluation variance, we replicate each evaluation problem at the dataset level (128× for AIME 2025/AMC23 and 8× for MATH500), effectively increasing the dataset size. This increases the effective number of samples per problem and stabilizes Pass@$k$ estimates, while keeping the underlying evaluation protocol unchanged.
We still generate $n=64$ responses per original problem and compute Pass@$k$ at the problem level using these $n$ samples.
We use Pass@$k$ as the evaluation metric, with $k \in \{1, 2, 4, 8, 16, 32, 64\}$.
To reduce variance in estimating Pass@$k$, we adopt the unbiased estimator from~\citet{chen2021evaluating}, which uses $n \ge k$ samples per problem and computes an unbiased estimate based on the number of correct responses:
\begin{equation}
\mathrm{Pass@}k
= \mathbb{E}_{x \sim \mathcal{C}}
\left[
1 - \frac{\binom{n-c}{k}}{\binom{n}{k}}
\right].
\label{eq:pass_at_k}
\end{equation}

\section{Results and Analysis}
\label{sec:results}

\begin{figure*}[t]
  \centering
  \includegraphics[width=\textwidth]{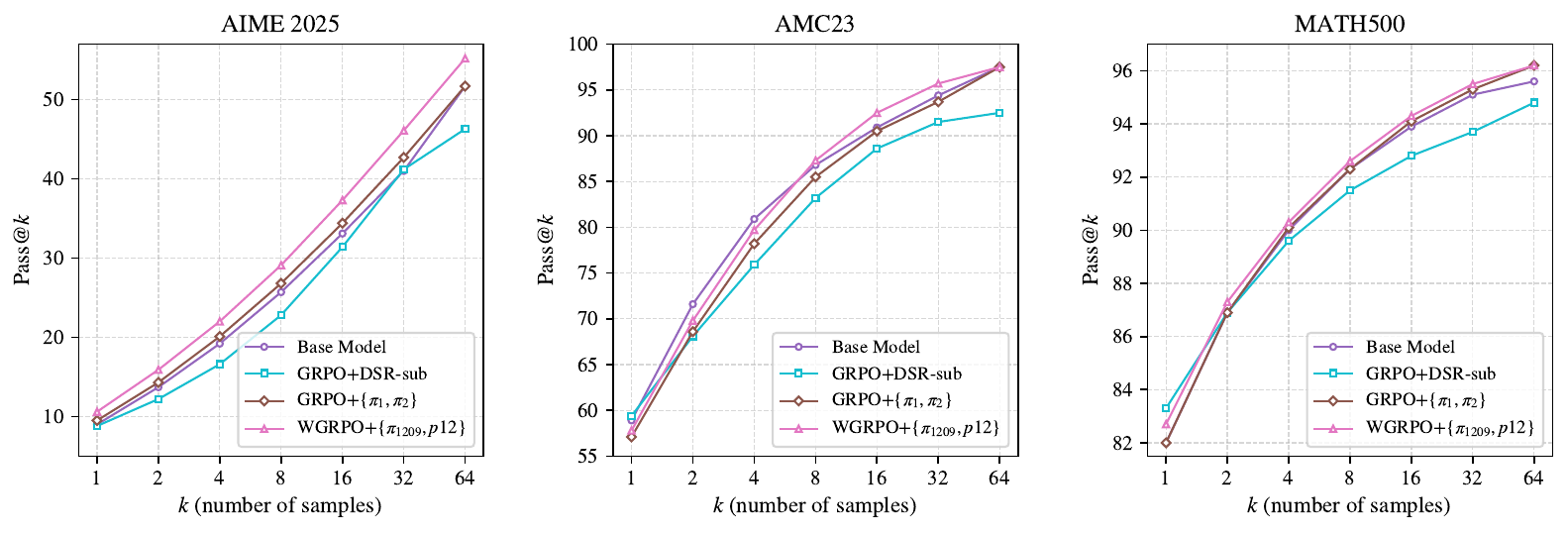}
  \caption{Pass@$k$ curves on \textsc{AIME 2025}, \textsc{AMC23}, and \textsc{MATH500} for Qwen2.5-Math-7B-Instruct with Base Model, GRPO+DSR-sub, GRPO+$\{\pi_{1},\pi_{2}\}$, WGRPO+$\{\pi_{1209},p_{12}\}$.
  WGRPO+$\{\pi_{1209},p_{12}\}$ is comparable to other methods, but shows distinct gains on \textsc{AIME 2025}.}
  \label{fig:passk_curves_qwen2}
\end{figure*}

\begin{table*}[t]
\centering
\caption{Pass@$k$ results on AIME 2025, AMC23, and MATH500 for Qwen2.5-Math-7B under different prompt selection strategies. ${\pi_1,\pi_2}$ are the two highest-variance prompts selected by the prior variance-based baseline~\cite{wang2025onetrainingexample}. ${\pi_{1209}, p_{12}}$ are selected by our bidirectional prompt selection method (Section~\ref{sec:instance_selection}). Bold and underlined numbers denote the best and second-best results for each $k$. Additional results for a broader comparison of prompt selection strategies are reported in Appendix~\ref{app:training_results}.}
\label{tab:passk_qwen_math_7b}

\small
\setlength{\tabcolsep}{6pt}
\renewcommand{\arraystretch}{1.08}

\begin{tabularx}{\textwidth}{l c *{7}{Y}}
\toprule
\textbf{Method} & \textbf{Size} & \multicolumn{7}{c}{\textbf{Pass@$k$}} \\
\cmidrule(lr){3-9}
$k$ &  & 1 & 2 & 4 & 8 & 16 & 32 & 64 \\
\midrule

&  & \multicolumn{7}{c}{\textbf{AIME 2025}} \\

Base Model & 0 & 4.9 & 8.8 & 14.6 & 21.5 & 27.9 & 33.1 & 37.6 \\
GRPO+DSR-sub & 1209 & \textbf{10.4} & \textbf{14.9} & \textbf{19.8} & \textbf{25.2} & \textbf{30.8} & \textbf{37.0} & \textbf{43.6} \\
GRPO+$\{\pi_{1209}, p_{12}\}$ & 2 & 2.9 & 5.5 & 9.8 & 16.3 & 24.1 & 32.2 & 40.9 \\
WGRPO+$\{\pi_{1}, p_{12}\}$ & 2 & \underline{5.1} & 9.0 & 14.0 & 19.8 & 26.1 & 33.0 & 40.4 \\
GRPO+$\{\pi_{1}, \pi_{2}\}$ & 2 & 3.6 & 6.5 & 11.0 & 16.8 & 23.8 & 32.2 & 41.3 \\
\rowcolor{rowgray}
WGRPO+$\{\pi_{1209}, p_{12}\}$ & 2 & \underline{5.1} & \underline{9.2} & \underline{15.1} & \underline{22.2} & \underline{29.0} & \underline{35.1} & \underline{41.6} \\
\midrule

&  & \multicolumn{7}{c}{\textbf{AMC23}} \\

Base Model & 0 & 34.5 & 49.7 & 63.5 & 75.0 & 84.1 & 90.5 & 94.9 \\
GRPO+DSR-sub & 1209 & \textbf{61.5} & \textbf{69.4} & \textbf{76.6} & \textbf{82.9} & \textbf{88.0} & 92.2 & 95.6 \\
GRPO+$\{\pi_{1209}, p_{12}\}$ & 2 & 47.6 & 61.2 & \underline{73.1} & 80.6 & 87.4 & \underline{92.5} & \underline{96.6} \\
WGRPO+$\{\pi_{1}, p_{12}\}$ & 2 & 44.7 & 60.7 & 72.4 & 80.5 & 86.9 & 92.3 & 96.4 \\
GRPO+$\{\pi_{1}, \pi_{2}\}$ & 2 & 46.7 & 59.8 & 70.4 & 77.9 & 83.8 & 89.1 & 94.0 \\
\rowcolor{rowgray}
WGRPO+$\{\pi_{1209}, p_{12}\}$ & 2 & \underline{48.0} & \underline{62.1} & \underline{73.1} & \underline{81.3} & \underline{87.6} & \textbf{92.9} & \textbf{97.0} \\
\midrule

&  & \multicolumn{7}{c}{\textbf{MATH500}} \\

Base Model & 0 & 45.8 & 61.1 & 73.6 & 82.6 & 88.6 & 92.2 & 94.3 \\
GRPO+DSR-sub & 1209 & \textbf{76.2} & \textbf{81.7} & \underline{85.3} & 87.9 & 90.0 & 91.8 & 93.1 \\
GRPO+$\{\pi_{1209}, p_{12}\}$ & 2 & 58.3 & 71.0 & 80.2 & 86.5 & 90.6 & 93.4 & 95.1 \\
WGRPO+$\{\pi_{1}, p_{12}\}$ & 2 & 61.5 & 75.7 & 84.5 & \underline{89.5} & \underline{92.3} & \textbf{94.2} & \textbf{95.5} \\
GRPO+$\{\pi_{1}, \pi_{2}\}$ & 2 & \underline{67.5} & 77.2 & 83.9 & 88.4 & 91.5 & 93.7 & \underline{95.4} \\
\rowcolor{rowgray}
WGRPO+$\{\pi_{1209}, p_{12}\}$ & 2 & 65.6 & \underline{78.1} & \textbf{85.6} & \textbf{89.9} & \textbf{92.4} & \underline{94.1} & \textbf{95.5} \\
\bottomrule
\end{tabularx}

\end{table*}

\paragraph{Compared methods.}
We report Pass@$k$ performance on \textsc{AIME 2025}, \textsc{AMC23}, and \textsc{MATH500}.
Base Model is the pretrained model without any RLVR training.
GRPO+DSR-sub trains the base model with GRPO using the 1209 prompts from DeepScaleR-sub pool. GRPO+$\{\pi_1,\pi_2\}$ is our main baseline: GRPO trained on two high-variance training prompts selected by the historical-accuracy-variance heuristic.
WGRPO+$\{\pi_{1209},p_{12}\}$ is our method: WGRPO combined with our easy+hard two-prompt selection, where $p_{12}$ is a hard training prompt and $\pi_{1209}$ is an easy training prompt.
We also include two ablations: GRPO+$\{\pi_{1209},p_{12}\}$ (replace WGRPO with GRPO) and WGRPO+$\{\pi_1,p_{12}\}$ (replace the easy prompt with a high variance prompt while keeping WGRPO).

\paragraph{Main comparison: our easy+hard based WGRPO consistently outperforms the high-variance-based GRPO baseline.}
As shown in Figure~\ref{fig:passk_curves_qwen1} and Figure~\ref{fig:passk_curves_qwen2}, WGRPO+$\{\pi_{1209},p_{12}\}$ consistently and meaningfully outperforms the baseline GRPO+$\{\pi_1,\pi_2\}$.
Across \textsc{AIME 2025} and \textsc{AMC23}, the gap is clear, especially at moderate $k$, with representative improvements such as 16.8$\rightarrow$22.2 on \textsc{AIME 2025} ($k{=}8$) and 94.0$\rightarrow$97.0 on \textsc{AMC23} ($k{=}64$).
The improvement on \textsc{MATH500} is smaller but remains generally consistent.
These results support a simple takeaway: high variance of historical accuracy is a weak proxy for informative training signal.
By design, the easy+hard pair provides complementary guidance under WGRPO: rare successes on the hard prompt produce a strong positive ``do'' signal, while rare failures on the easy prompt produce a strong negative ``do-not'' signal.
This bidirectional and low-ambiguity teaching signal yields a more stable optimization direction than variance-based selection.

\paragraph{Comparison to large-scale RLVR: competitive performance with $\mathbf{2}$ vs.\ $\mathbf{1209}$ training prompts.}
For Qwen2.5-Math-7B in Figure~\ref{fig:passk_curves_qwen1}, while GRPO+DSR-sub achieves the strongest results on \textsc{AIME 2025} across all $k$, our two-prompts method recovers a large portion of this gain (41.6 at $k{=}64$) using three orders of magnitude fewer training prompts.
More strikingly, on \textsc{AMC23}, WGRPO+$\{\pi_{1209},p_{12}\}$ surpasses GRPO+DSR-sub at larger $k$ (97.0 vs.\ 95.6 at $k{=}64$),
and on \textsc{MATH500} it becomes best or near-best for $k\ge 4$ (e.g., 89.9 vs.\ 87.9 at $k{=}8$; 92.4 vs.\ 90.0 at $k{=}16$).
For Qwen2.5-Math-7B-Instruct in Figure~\ref{fig:passk_curves_qwen2}, WGRPO+$\{\pi_{1209},p_{12}\}$ achieves the best results on \textsc{AIME 2025} across all $k$.
These results suggest that, beyond sheer scale, the quality and structure of the RLVR teaching signal can be a primary driver of generalizable reasoning improvements.

\paragraph{Ablation I (algorithm): WGRPO is necessary for effective two-prompt training.}
Replacing WGRPO with GRPO while keeping the same two training prompts (GRPO+$\{\pi_{1209},p_{12}\}$) leads to a consistent drop.
As shown in Table~\ref{tab:passk_qwen_math_7b},
on \textsc{AIME 2025}, WGRPO improves 16.3$\rightarrow$22.2 at $k{=}8$ and 24.1$\rightarrow$29.0 at $k{=}16$.
On \textsc{MATH500}, the gap is similarly notable (86.5$\rightarrow$89.9 at $k{=}8$; 90.6$\rightarrow$92.4 at $k{=}16$).
This indicates that, under extreme data scarcity, simply applying GRPO is insufficient: the training signal must be reshaped so that rare but meaningful outcomes dominate the update.

\paragraph{Ablation II (instance selection): easy+hard pairing is more reliable than mixing in high-variance prompts.}
Keeping WGRPO but replacing the easy prompt $\pi_{1209}$ with a high-variance prompt (WGRPO+$\{\pi_1,p_{12}\}$) generally hurts performance.
As shown in Table~\ref{tab:passk_qwen_math_7b},
on \textsc{AIME 2025}, the degradation is visible across $k$ (e.g., 22.2 vs.\ 19.8 at $k{=}8$; 41.6 vs.\ 40.4 at $k{=}64$).
On \textsc{AMC23}, our method also remains better at larger $k$ (97.0 vs.\ 96.4 at $k{=}64$).
On \textsc{MATH500}, the two variants are close at large $k$, but the easy+hard pairing is still competitive and typically better at moderate $k$ (e.g., 89.9 vs.\ 89.5 at $k{=}8$; 92.4 vs.\ 92.3 at $k{=}16$).
Overall, these results support that the ``easy negative guidance'' prompt should be stable rather than merely uncertain.

\paragraph{Mechanistic analysis: rare-event amplification turns one hard and one easy prompt into complementary teaching signals.}
WGRPO applies group-wise normalization to weighted outcomes, inducing an advantage magnitude that depends on the group success rate $p$.
Concretely, when a training prompt is hard ($p$ is small), correct trajectories become rare events and WGRPO assigns them a large positive advantage, producing a strong ``do this'' update.
Conversely, when a training prompt is easy ($p$ is large), incorrect trajectories become rare events and WGRPO assigns them a large negative advantage, producing a strong ``do-not'' update.
Our instance selection explicitly pairs one hard prompt ($p_{12}$) and one easy prompt ($\pi_{1209}$), so that each minibatch contains both a stable positive anchor and a stable negative warning.
This bidirectional teaching signal makes the gradient direction less ambiguous and reduces the chance that updates are dominated by sampling noise.

%% file: sec/06_conclusion.tex
\section{Conclusion}
We studied training-prompt selection for RLVR in an extreme low-data regime with sparse, binary verification, and found that common variance- or hardness-based heuristics can make updates overly sensitive to minibatch sampling noise. Motivated by a mechanism-level view of group-normalized outcomes in which learning is driven by rare tail events, we proposed positive--negative pairing: an easy-but-brittle prompt that induces rare failures paired with a hard-but-solvable prompt that induces rare successes, together with WGRPO to contrastively amplify these complementary outcomes into a bidirectional training signal under a fixed rollout budget. This design stabilizes update directions and improves the sample efficiency of low-data RLVR for mathematical reasoning. Empirically, on Qwen-family models we obtain consistent Pass@$k$ gains across \textsc{AIME 2025}, \textsc{AMC23}, and \textsc{MATH500} using only two fixed training prompts. Overall, our results suggest that when RLVR data are limited, carefully structuring the training signal is crucial.

%% file: sec/08_impact_statement.tex
\section*{Impact Statement}
This paper presents work whose goal is to advance the field of Machine
Learning. There are many potential societal consequences of our work, none
which we feel must be specifically highlighted here.

%% file: sec/07_appendix.tex
\appendix
\appendix
\onecolumn

\section{Derivation of closed-form rare-event amplification}
\label{app:closed_form}
\begin{proposition}
Let $k$ be the number of correct responses in a group of size $G$, and let
$p = k/G$ be the group success rate.
For $0 < k < G$, suppose the outcome mapping is
\[
y_i =
\begin{cases}
+1, & \text{if response } o_i \text{ is correct},\\
-\lambda_{\mathrm{neg}}, & \text{otherwise}.
\end{cases}
\]
Then the group-wise mean and standard deviation admit the following closed forms:
\[
\mu_q = (1+\lambda_{\mathrm{neg}})p - \lambda_{\mathrm{neg}}, \qquad
\sigma_q = (1+\lambda_{\mathrm{neg}})\sqrt{p(1-p)}.
\]
\end{proposition}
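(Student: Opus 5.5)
The computation is direct: the group takes only two values, so its mean and (population) variance follow from the count $k$. First, partition the indices $\{1,\dots,G\}$ into the set of correct responses (size $k$), where $y_i=1$, and the set of incorrect ones (size $G-k$), where $y_i=-\lambda_{\mathrm{neg}}$. Then
\[
\mu_q=\frac{1}{G}\bigl(k-(G-k)\lambda_{\mathrm{neg}}\bigr)=p-(1-p)\lambda_{\mathrm{neg}}=(1+\lambda_{\mathrm{neg}})p-\lambda_{\mathrm{neg}}.
\]

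Next, compute the deviations from the mean on each class. For a correct response, $y_i-\mu_q=1-(1+\lambda_{\mathrm{neg}})p+\lambda_{\mathrm{neg}}=(1+\lambda_{\mathrm{neg}})(1-p)$. For an incorrect response, $y_i-\mu_q=-\lambda_{\mathrm{neg}}-(1+\lambda_{\mathrm{neg}})p+\lambda_{\mathrm{neg}}=-(1+\lambda_{\mathrm{neg}})p$. These deviations also give the numerators of $A^{\pm}$ in \eqref{eq:adv_with_eps_rewrite}, so the same computation serves both results.

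Then sum the squared deviations with multiplicities $k=pG$ and $G-k=(1-p)G$, and divide by $G$:
\[
\sigma_q^2=(1+\lambda_{\mathrm{neg}})^2\bigl[p(1-p)^2+(1-p)p^2\bigr]=(1+\lambda_{\mathrm{neg}})^2\,p(1-p).
\]
Take the nonnegative square root. Since $\lambda_{\mathrm{neg}}>0$, we have $1+\lambda_{\mathrm{neg}}>0$, which gives $\sigma_q=(1+\lambda_{\mathrm{neg}})\sqrt{p(1-p)}$. The hypothesis $0<k<G$ is used only to note that $\sigma_q>0$, so the group is non-degenerate. The formulas themselves remain valid in the degenerate cases, where they give $\sigma_q=0$.

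There is no real obstacle. The only points needing care are to use the population (biased, $1/G$) standard deviation exactly as defined in the Setup rather than the $1/(G-1)$ version, and to take the positive root using $1+\lambda_{\mathrm{neg}}>0$.
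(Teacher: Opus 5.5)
Your proposal is correct and matches the paper's proof step for step. Like the paper, you get the mean by splitting the group into correct and incorrect responses, compute the per-class deviations $(1+\lambda_{\mathrm{neg}})(1-p)$ and $-(1+\lambda_{\mathrm{neg}})p$, then sum the squared deviations with their multiplicities and factor out $p(1-p)$; your explicit use of $1+\lambda_{\mathrm{neg}}>0$ when taking the positive square root is a small point of care that the paper leaves implicit.
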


\begin{proof}
We derive the group-wise mean and standard deviation directly from their definitions.

\paragraph{Group-wise mean.}
By definition,
\[
\mu_q = \frac{1}{G}\sum_{i=1}^{G} y_i.
\]
Since there are $k$ correct responses and $G-k$ incorrect ones, we have
\[
\sum_{i=1}^{G} y_i
= k \cdot (+1) + (G-k)\cdot(-\lambda_{\mathrm{neg}}).
\]
Rearranging terms yields
\[
\sum_{i=1}^{G} y_i
= k(1+\lambda_{\mathrm{neg}}) - G\lambda_{\mathrm{neg}}.
\]
Dividing both sides by $G$ and substituting $p = k/G$, we obtain
\[
\mu_q = (1+\lambda_{\mathrm{neg}})p - \lambda_{\mathrm{neg}}.
\]

\paragraph{Group-wise standard deviation.}
The group-wise variance is defined as
\[
\sigma_q^2
= \frac{1}{G}\sum_{i=1}^{G} (y_i - \mu_q)^2.
\]
We consider the two types of responses separately.

For a correct response with $y_i = 1$, we have
\[
1 - \mu_q
= 1 - \big[(1+\lambda_{\mathrm{neg}})p - \lambda_{\mathrm{neg}}\big]
= (1+\lambda_{\mathrm{neg}})(1-p).
\]
For an incorrect response with $y_i = -\lambda_{\mathrm{neg}}$, we have
\[
-\lambda_{\mathrm{neg}} - \mu_q
= -\lambda_{\mathrm{neg}} - \big[(1+\lambda_{\mathrm{neg}})p - \lambda_{\mathrm{neg}}\big]
= -(1+\lambda_{\mathrm{neg}})p.
\]

Therefore,
\begin{align*}
\sigma_q^2
&= \frac{1}{G}\Big[
k(1+\lambda_{\mathrm{neg}})^2(1-p)^2
+ (G-k)(1+\lambda_{\mathrm{neg}})^2 p^2
\Big] \\
&= (1+\lambda_{\mathrm{neg}})^2
\Big[
p(1-p)^2 + (1-p)p^2
\Big].
\end{align*}
Factoring out $p(1-p)$ gives
\[
\sigma_q^2 = (1+\lambda_{\mathrm{neg}})^2 p(1-p).
\]
Taking the square root completes the proof:
\[
\sigma_q = (1+\lambda_{\mathrm{neg}})\sqrt{p(1-p)}.
\]
\end{proof}

\section{More details for Weighted GRPO}
\label{app:detail_WGRPO}

\subsection{Code for Weighted GRPO}
\label{app:code_WGRPO}
In this subsection, we provide the detailed implementation of WGRPO. Note that in the verl framework, the GRPO outcome-advantage routine is implemented as \texttt{compute\_grpo\_outcome\_advantage}, while our method corresponds to \texttt{compute\_wgrpo\_outcome\_advantage}. Since \texttt{verl} uses this function name in multiple places, renaming \texttt{compute\_grpo\_outcome\_advantage} requires modifying several related references. For simplicity, one can directly replace the implementation body with our \texttt{compute\_wgrpo\_outcome\_advantage} code while keeping the original function name \texttt{compute\_grpo\_outcome\_advantage}.
\medskip

\captionsetup{type=table}
\captionof{table}{Implementation of \texttt{compute\_wgrpo\_outcome\_advantage}.}
\label{tab:code_wgrpo_outcome_advantage}
\begin{icmlcodebox}
def compute_wgrpo_outcome_advantage(
    token_level_rewards: torch.Tensor,
    eos_mask: torch.Tensor,
    index: torch.Tensor,
    lambda_neg: float = 100.0,
    epsilon: float = 1e-6,
    reward_is_binary_01: bool = False,   # True: reward in {0,1}; False: reward in {+1,-1} or signed
):

    response_length = token_level_rewards.shape[-1]
    scores_raw = token_level_rewards.sum(dim=-1)

    if reward_is_binary_01:
        correct = scores_raw > 0.5
    else:
        correct = scores_raw > 0

    scores = torch.where(
        correct,
        torch.ones_like(scores_raw, dtype=torch.float32),
        -lambda_neg * torch.ones_like(scores_raw, dtype=torch.float32),
    )

    id2score = defaultdict(list)
    id2mean = {}
    id2std = {}

    with torch.no_grad():
        bsz = scores.shape[0]
        for i in range(bsz):
            id2score[index[i]].append(scores[i])

        for idx in id2score:
            if len(id2score[idx]) == 1:
                id2mean[idx] = torch.tensor(0.0, device=scores.device)
                id2std[idx] = torch.tensor(1.0, device=scores.device)
            else:
                stacked = torch.stack(id2score[idx]).to(scores.device)
                id2mean[idx] = stacked.mean()
                id2std[idx] = stacked.std(unbiased=False)

        for i in range(bsz):
            scores[i] = (scores[i] - id2mean[index[i]]) / (id2std[index[i]] + epsilon)
        advantages = scores.unsqueeze(-1).expand(-1, response_length) * eos_mask

    return advantages, advantages
\end{icmlcodebox}

\begin{table*}[t]
\centering
\caption{Pass@$k$ results on AIME 2025, AMC23, and MATH500 under different $\lambda_{\mathrm{neg}}$ settings. Bold and underlined numbers denote the best and second-best results for each $k$ (within each dataset).}
\label{tab:passk_lambda}

\small
\setlength{\tabcolsep}{6pt}
\renewcommand{\arraystretch}{1.08}

\begin{tabularx}{\textwidth}{l *{7}{Y}}
\toprule
\textbf{$\lambda_{\mathrm{neg}}$} & \multicolumn{7}{c}{\textbf{Pass@$k$}} \\
\cmidrule(lr){2-8}
$k$ & 1 & 2 & 4 & 8 & 16 & 32 & 64 \\
\midrule

& \multicolumn{7}{c}{\textbf{AIME 2025}} \\

$1$        & 2.9 & 5.5 & 9.8  & 16.3 & 24.1 & 32.2 & 40.9 \\
$50$       & \underline{4.6} & 8.3 & 13.9 & 20.9 & 28.4 & \underline{34.7} & \textbf{41.6} \\
$100$& \textbf{5.1} & \textbf{9.2} & \textbf{15.1} & \textbf{22.2} & \textbf{29.0} & \textbf{35.1} & \textbf{41.6} \\
$200$ & 4.5 & 8.1 & 13.5 & 20.3 & 27.6 & 34.4 & \textbf{41.6} \\
$500$      & \textbf{5.1} & \underline{9.1} & \underline{14.7} & \underline{21.8} & \underline{28.6} & 34.5 & \underline{41.3} \\
\midrule

& \multicolumn{7}{c}{\textbf{AMC23}} \\

$1$        & \underline{47.6} & 61.2 & \textbf{73.1} & 80.6 & 87.4 & 92.5 & 96.6 \\
$50$       & \textbf{48.0} & \underline{61.7} & \underline{72.8} & \underline{81.2} & \textbf{87.7} & \underline{92.7} & \underline{96.9} \\
$100$& \textbf{48.0} & \textbf{62.1} & \textbf{73.1} & \textbf{81.3} & \underline{87.6} & \textbf{92.9} & \textbf{97.0} \\
$200$ & 41.0 & 56.9 & 69.7 & 78.6 & 84.8 & 89.9 & 94.3 \\
$500$      & 47.5 & 56.5 & 71.3 & 80.6 & 85.9 & 90.6 & 95.1 \\
\midrule

& \multicolumn{7}{c}{\textbf{MATH500}} \\

$1$        & 58.3 & 71.0 & 80.2 & 86.5 & 90.6 & 93.4 & 95.1 \\
$50$       & 65.2 & \underline{77.6} & \underline{85.4} & 89.3 & 91.9 & 93.7 & 95.1 \\
$100$& \underline{65.6} & \textbf{78.1} & \textbf{85.6} & \textbf{89.9} & \textbf{92.4} & \underline{94.1} & \underline{95.5} \\
$200$ & 63.2 & 76.4 & 84.8 & \underline{89.6} & \underline{92.3} & \textbf{94.2} & \textbf{95.6} \\
$500$      & \textbf{65.8} & \textbf{78.1} & \underline{85.4} & 89.5 & 91.8 & 93.8 & 95.3 \\
\bottomrule
\end{tabularx}

\end{table*}

\subsection{Degenerate Groups and the Role of $\lambda_{\mathrm{neg}}$}
\label{app:degenerate}

This subsection clarifies the behavior of WGRPO in degenerate and near-degenerate groups, which are common under our low-data setting with group size $G$.
Recall Eq.~\ref{eq:wgrpo_adv_rewrite}:
\begin{equation*}
A^{\mathrm{WGRPO}}_{i,t}=\frac{y_i-\mu_q}{\sigma_q+\epsilon_{\mathrm{std}}}m_{i,t},
\end{equation*}
where $\epsilon_{\mathrm{std}}>0$ is a small constant for numerical stability when $\sigma_q$ is close to $0$.

\paragraph{Degenerate groups ($k\in\{0,G\}$).}
Let $k$ denote the number of correct responses within a group of size $G$.
When all $G$ responses are correct ($k=G$), we have $y_i\equiv +1$ for all $i$, hence $\mu_q=1$ and $\sigma_q=0$.
Therefore $y_i-\mu_q\equiv 0$ and $A^{\mathrm{WGRPO}}_{i,t}\equiv 0$ for all tokens.
Similarly, when all responses are incorrect ($k=0$), $y_i\equiv -\lambda_{\mathrm{neg}}$ for all $i$, so $\mu_q=-\lambda_{\mathrm{neg}}$ and $\sigma_q=0$, again yielding $A^{\mathrm{WGRPO}}_{i,t}\equiv 0$.
Thus, degenerate groups contribute no policy-gradient update (up to the KL term in our objective), which avoids spurious updates when within-group outcome variance collapses.

\paragraph{Why degenerate groups can be frequent but not harmful.}
In our two-prompt design we intentionally target regimes with $p\approx 1/G$ (hard-but-solvable) and $p\approx 1-1/G$ (easy-but-brittle).
With $G=8$, the probability of observing $k\in\{0,G\}$ is non-negligible, so degenerate groups can occur frequently.
However, these cases simply yield $A\equiv 0$ and effectively skip the policy-gradient update for that group.
Meanwhile, the same regimes place substantial probability mass on informative rare-event cases such as $k=1$ (rare success) or $k=G-1$ (rare failure), which provide strong ``do'' / ``do-not'' teaching signals.

\paragraph{Role of $\lambda_{\mathrm{neg}}$ in near-degenerate regimes.}
Our closed-form analysis in the main text focuses on non-degenerate groups ($0<k<G$).
In the idealized limit where $\epsilon_{\mathrm{std}}$ is negligible, the normalized advantage geometry is largely governed by $p$ and the factor $(1+\lambda_{\mathrm{neg}})$ cancels out (Eq.~\ref{eq:adv_with_eps_rewrite}).
In practice, groups with $k\in\{1,G-1\}$ can be near-degenerate and finite $\epsilon_{\mathrm{std}}$ (together with finite-precision arithmetic) prevents exact cancellation.
In this regime, $\lambda_{\mathrm{neg}}$ affects the scale of $y_i$ and thus the magnitude of $\sigma_q$, making normalization less sensitive to $\epsilon_{\mathrm{std}}$ and improving numerical robustness.
Unless specified otherwise, we use $\lambda_{\mathrm{neg}}=100$ and $\epsilon_{\mathrm{std}}=10^{-6}$ (Table~\ref{tab:impl_hparams}).

\subsection{Results under different $\lambda_{\mathrm{neg}}$ settings}
\label{app:lambda_neg_results}
We evaluate the sensitivity of WGRPO to the negative-weight coefficient $\lambda_{\mathrm{neg}}$, which defines the outcome mapping $y_i\in\{+1,-\lambda_{\mathrm{neg}}\}$ prior to within-group normalization (Appendix~\ref{app:code_WGRPO}). 
Across all three benchmarks (Table~\ref{tab:passk_lambda}), performance is consistently lower with $\lambda_{\mathrm{neg}}{=}1$, while choosing a larger $\lambda_{\mathrm{neg}}$ yields stable and generally improved Pass@$k$. A key observation is that, once $\lambda_{\mathrm{neg}}$ is moderately large (e.g., $\ge 50$ in our sweep), results vary only slightly across a broad range of values. This plateau suggests that WGRPO does not rely on fine-grained tuning of $\lambda_{\mathrm{neg}}$, instead, $\lambda_{\mathrm{neg}}$ mainly serves as a coarse scaling that separates negative outcomes strongly enough for robust normalization and learning. Practically, we therefore recommend using a reasonably large default (we use $\lambda_{\mathrm{neg}}{=}100$) rather than optimizing $\lambda_{\mathrm{neg}}$ per dataset.

\section{Additional Experimental Details and Reproducibility}
\label{app:repro}

\subsection{Training data and leakage control}
\label{app:repro_data}
Training candidates are drawn from \textsc{AIME 2025}~\citep{aops2025aime} and DeepScaleR-sub~\citep{wang2025onetrainingexample},
where DeepScaleR-sub contains 1209 prompts sampled from DeepScaleR-Preview-Dataset~\citep{luo2025deepscaler}.
In our easy+hard setting, one \textsc{AIME 2025} prompt is used for training; we therefore remove it from the \textsc{AIME 2025} evaluation set to avoid leakage.
All rewards are outcome-based verifiable rewards computed by exact-answer checking, producing binary rewards.

\subsection{Selection frequency and probing overhead}
\label{app:repro_probe}
Our easy+hard selection requires probing candidate prompts under the \emph{current policy} to estimate their empirical success rates.
In our experiments, this probing step is executed \emph{once before RLVR training} to select the final two prompts, and the selected pair is kept fixed throughout training.
The probing uses a lightweight rollout budget relative to RLVR training and therefore introduces only small additional overhead without changing the training-time rollout budgets used for method comparisons.

\subsection{Implementation hyperparameters}
\label{app:repro_hparams}
Table~\ref{tab:impl_hparams} lists the key hyperparameters needed to reproduce GRPO/WGRPO training. We report primal values that are fixed across experiments.

\begin{table}[t]
\centering
\small
\setlength{\tabcolsep}{8pt}
\renewcommand{\arraystretch}{1.1}
\caption{Implementation hyperparameters for GRPO/WGRPO training.}
\label{tab:impl_hparams}
\begin{tabular}{l p{6.2cm}}
\toprule
\textbf{Item} & \textbf{Value} \\
\midrule
Framework & \texttt{verl}~\citep{sheng2024hybridflow} \\
Max context length & 4096 \\
Learning rate & $1\times 10^{-6}$ \\
Group size ($G$ responses per prompt) & 8 \\
Max training steps & 500 \\
Hardware budget & $\le$ 8 H100 GPUs \\
Optimizer & AdamW (verl default) \\
Adam $\beta_1,\beta_2$ & 0.9, 0.95 \\
Weight decay & 0.01 \\
LR scheduler / warmup & none \\
Gradient clipping & 1 \\
Max generation tokens & 3072 \\
Reward format & binary verifiable reward (exact match) \\
KL coefficient & 0.001 \\
Negative-weight coefficient $\lambda_{\text{neg}}$ & 100 \\
Numerical stability constant $\varepsilon_{\text{std}}$ & $1 \times 10^{-6}$ \\
Independent groups of samping G & 8 \\
\bottomrule
\end{tabular}
\end{table}

\subsection{Checkpointing and early termination}
\label{app:repro_earlystop}
By default, we do not apply early stopping and train all methods to the same maximum-step budget for a given base model.
We only enable early termination to prevent irreversible collapse, and the rule is applied identically to all methods:
we stop training if the evaluation performance decreases monotonically for $K$ consecutive evaluation checkpoints (we use $K=5$).
When early termination is triggered, we use the best checkpoint along the training trajectory under the fixed evaluation protocol.

\subsection{Selected prompts for two-prompt training}
\label{app:repro_selected}
For transparency, we provide the full prompt text, answer, and source dataset for the selected prompts in Appendix~\ref{sec:selected_examples},
enabling exact reproduction of two-prompt training setting.

\begin{table*}[t]
\centering
\caption{Pass@$k$ results on AIME 2025, AMC23, and MATH500 for Qwen2.5-Math-7B under different prompt selection strategies. The three baseline pairs $\{\pi_{1},\pi_{2}\}$, $\{\pi_{1},\pi_{3}\}$, and $\{\pi_{2},\pi_{3}\}$ are selected by the prior variance-based baseline~\cite{wang2025onetrainingexample} and trained with GRPO. All other pairs are selected by our bidirectional prompt selection method (Section~\ref{sec:instance_selection}) and trained with WGRPO. Bold and underlined numbers denote the best and second-best results for each $k$ (within each dataset).}
\label{tab:passk_qwen_math_7b_pairs}

\small
\setlength{\tabcolsep}{6pt}
\renewcommand{\arraystretch}{1.08}

\begin{tabularx}{\textwidth}{l *{7}{Y}}
\toprule
\textbf{Prompt set} & \multicolumn{7}{c}{\textbf{Pass@$k$}} \\
\cmidrule(lr){2-8}
$k$ & 1 & 2 & 4 & 8 & 16 & 32 & 64 \\
\midrule

& \multicolumn{7}{c}{\textbf{AIME 2025}} \\

$\{\pi_{1}, \pi_{2}\}$ (baseline) & 3.6 & 6.5 & 11.0 & 16.8 & 23.8 & 32.2 & 41.3 \\
$\{\pi_{1}, \pi_{3}\}$ (baseline) & 4.1 & 7.2 & 12.5 & 17.1 & 23.7 & 32.1 & 40.6 \\
$\{\pi_{2}, \pi_{3}\}$ (baseline) & 3.5 & 6.3 & 10.5 & 16.2 & 22.9 & 31.0 & 40.1 \\
$\{\pi_{60}, p_{26}\}$            & \textbf{10.7} & \textbf{16.1} & \underline{21.6} & \underline{26.5} & \underline{31.2} & \underline{36.8} & \underline{43.3} \\
$\{\pi_{1033}, p_{29}\}$          & \underline{10.2} & \underline{15.8} & \textbf{21.8} & \textbf{27.7} & \textbf{33.6} & \textbf{39.9} & \textbf{46.2} \\
$\{\pi_{682}, p_{20}\}$           & 7.9 & 12.4 & 17.7 & 23.3 & 28.7 & 34.5 & 41.2 \\
$\{\pi_{1209}, p_{12}\}$          & 5.1 & 9.2 & 15.1 & 22.2 & 29.0 & 35.1 & 41.6 \\
$\{\pi_{1}, p_{12}\}$             & 5.1 & 9.0 & 14.0 & 19.8 & 26.1 & 33.0 & 40.4 \\
\midrule

& \multicolumn{7}{c}{\textbf{AMC23}} \\

$\{\pi_{1}, \pi_{2}\}$ (baseline) & 46.7 & 59.8 & 70.4 & 77.9 & 83.8 & 89.1 & 94.0 \\
$\{\pi_{1}, \pi_{3}\}$ (baseline) & 47.1 & 60.3 & 71.0 & 77.8 & 83.4 & 88.6 & 93.5 \\
$\{\pi_{2}, \pi_{3}\}$ (baseline) & 46.5 & 59.5 & 70.1 & 77.3 & 83.2 & 88.5 & 93.3 \\
$\{\pi_{60}, p_{26}\}$            & \textbf{61.2} & \textbf{70.9} & \textbf{78.1} & \textbf{84.3} & \textbf{89.3} & \textbf{93.3} & \underline{96.5} \\
$\{\pi_{1033}, p_{29}\}$          & \underline{60.5} & \underline{69.8} & \underline{77.0} & \underline{83.2} & \underline{88.5} & \underline{93.0} & 96.4 \\
$\{\pi_{682}, p_{20}\}$           & 57.4 & 68.1 & 75.7 & 81.7 & 86.2 & 90.1 & 94.2 \\
$\{\pi_{1209}, p_{12}\}$          & 48.0 & 62.1 & 73.1 & 81.3 & 87.6 & 92.9 & \textbf{97.0} \\
$\{\pi_{1}, p_{12}\}$             & 44.7 & 60.7 & 72.4 & 80.5 & 86.9 & 92.3 & 96.4 \\
\midrule

& \multicolumn{7}{c}{\textbf{MATH500}} \\

$\{\pi_{1}, \pi_{2}\}$ (baseline) & 67.5 & 77.2 & 83.9 & 88.4 & 91.5 & 93.7 & 95.4 \\
$\{\pi_{1}, \pi_{3}\}$ (baseline) & 68.0 & 77.5 & 83.9 & 88.2 & 91.3 & 93.4 & 95.1 \\
$\{\pi_{2}, \pi_{3}\}$ (baseline) & 67.3 & 77.0 & 83.6 & 87.8 & 91.1 & 93.2 & 95.2 \\
$\{\pi_{60}, p_{26}\}$            & \underline{75.1} & \textbf{82.7} & \textbf{87.5} & \textbf{90.5} & \textbf{92.7} & \textbf{94.3} & \textbf{95.7} \\
$\{\pi_{1033}, p_{29}\}$          & \textbf{75.5} & \textbf{82.7} & \underline{87.4} & \textbf{90.5} & \textbf{92.7} & \textbf{94.3} & \underline{95.5} \\
$\{\pi_{682}, p_{20}\}$           & 72.6 & \underline{81.5} & 86.9 & \underline{90.3} & \underline{92.4} & 93.9 & 95.1 \\
$\{\pi_{1209}, p_{12}\}$          & 65.6 & 78.1 & 85.6 & 89.9 & \underline{92.4} & 94.1 & \underline{95.5} \\
$\{\pi_{1}, p_{12}\}$             & 61.5 & 75.7 & 84.5 & 89.5 & 92.3 & \underline{94.2} & \underline{95.5} \\
\bottomrule
\end{tabularx}

\end{table*}

\section{Additional paired-prompt results}
\label{app:training_results}

Table~\ref{tab:passk_qwen_math_7b_pairs} reports results from multiple prompt pairs to assess whether the gains of our bidirectional prompt selection are robust to the specific choice of prompts.
For the variance-based baseline, we evaluate three representative pairs $\{\pi_1,\pi_2\}$, $\{\pi_1,\pi_3\}$, and $\{\pi_2,\pi_3\}$ to mitigate concerns that a single pair may reflect randomness.
For our method, we report five pairs selected by the proposed criterion (Section~\ref{sec:instance_selection}), aiming to test generalization across different easy--hard combinations rather than optimizing for one particular pair.
Across AIME~2025, AMC23, and MATH500, the pairs chosen by our method consistently achieve strong Pass@$k$ performance, and in most cases outperform the baseline pairs, especially at moderate to large $k$.
These results suggest that the benefit of bidirectional pairing is not tied to a single hand-picked prompt pair, but generalizes across multiple selected pairs.

\section{Limitations}

\paragraph{Model scale limitation.}
Our experiments are constrained by computational resources, so we focus on the 7B (e.g., Qwen2.5-Math-7B and Qwen2.5-Math-7B-Instruct) and do not run full RLVR training for larger models, like 14B/32B/72B models. As a result, future works should examine how the proposed easy–hard pairing and its group-normalization–induced bidirectional teaching signals scale to substantially larger models.

\paragraph{Task coverage limitation.}
We focus on mathematical reasoning and do not evaluate other RLVR tasks with deterministic outcomes, such as code generation. Still, our method shows strong generalization within math: although we select only two training prompts from AIME 2025 and DeepScaleR-sub, the trained model improves on AMC23 and MATH500 over the base model. A natural next step is to test whether the same prompt-selection strategy transfers to coding and other deterministic tasks.

\section{Details of selected prompts}

\label{sec:selected_examples}

{
\centering
\captionof{table}{Details of prompt $\pi_{1}$.}
\label{tab:pi1_example}
\small
\begin{tabularx}{\linewidth}{@{}X@{}}
\toprule
\textbf{Prompt:} \\
\midrule
{\ttfamily\footnotesize
The pressure \textbackslash\textbackslash( P \textbackslash\textbackslash) exerted by wind on a sail varies jointly as the area
\textbackslash\textbackslash( A \textbackslash\textbackslash) of the sail and the cube of the wind's velocity
\textbackslash\textbackslash( V \textbackslash\textbackslash).
When the velocity is \textbackslash\textbackslash( 8 \textbackslash\textbackslash) miles per hour, the pressure on a sail of
\textbackslash\textbackslash( 2 \textbackslash\textbackslash) square feet is \textbackslash\textbackslash( 4 \textbackslash\textbackslash) pounds.
Find the wind velocity when the pressure on \textbackslash\textbackslash( 4 \textbackslash\textbackslash) square feet of sail is
\textbackslash\textbackslash( 32 \textbackslash\textbackslash) pounds.
Let's think step by step and output the final answer within \textbackslash boxed\{\}.
} \\
\midrule
\textbf{Ground truth (label in DSR-sub):} 12.8. \\
\bottomrule
\end{tabularx}
\par
}

\addvspace{\floatsep}
\bigskip

{
\centering
\captionof{table}{Details of prompt $\pi_{2}$.}
\label{tab:pi2_details}
\small
\begin{tabularx}{\linewidth}{@{}X@{}}
\toprule
\textbf{Prompt:} \\
\midrule
{\ttfamily\footnotesize
How many positive divisors do 9240 and 13860 have in common?
Let's think step by step and output the final answer within \textbackslash boxed\{\}.
} \\
\midrule
\textbf{Ground truth (label in DSR-sub):} 24. \\
\bottomrule
\end{tabularx}
\par
}

\addvspace{\floatsep}
\bigskip

{
\centering
\captionof{table}{Details of prompt $\pi_{3}$.}
\label{tab:pi531_details}
\small
\begin{tabularx}{\linewidth}{@{}X@{}}
\toprule
\textbf{Prompt:} \\
\midrule
{\ttfamily\footnotesize
There are 10 people who want to choose a committee of 5 people among them. They do this by first electing a set of \$1,2,3\$, or 4 committee leaders, who then choose among the remaining people to complete the 5-person committee. In how many ways can the committee be formed, assuming that people are distinguishable? (Two committees that have the same members but different sets of leaders are considered to be distinct.) Let's think step by step and output the final answer within \textbackslash boxed\{\}.
} \\
\midrule
\textbf{Ground truth (label in DSR-sub):} 7560. \\
\bottomrule
\end{tabularx}
\par
}

\addvspace{\floatsep}
\bigskip

{
\centering
\captionof{table}{Details of prompt $\pi_{60}$.}
\label{tab:pi60_details}
\small

\begin{tabularx}{\linewidth}{@{}X@{}}
\toprule
\textbf{Prompt:} \\
\midrule
{\ttfamily\footnotesize
\detokenize{Given vectors $$\\overrightarrow {m}=( \\sqrt {3}\\sin x+\\cos x,1), \\overrightarrow {n}=(\\cos x,-f(x)), \\overrightarrow {m}\\perp \\overrightarrow {n}$$.\n(1) Find the monotonic intervals of $f(x)$;\n(2) Given that $A$ is an internal angle of $\\triangle ABC$, and $$f\\left( \\frac {A}{2}\\right)= \\frac {1}{2}+ \\frac { \\sqrt {3}}{2},a=1,b= \\sqrt {2}$$, find the area of $\\triangle ABC$. Let's think step by step and output the final answer within \\boxed{}.}
} \\
\midrule
\textbf{Ground truth (label in DSR-sub):} $\frac{\sqrt{3}-1}{4}$. \\
\bottomrule
\end{tabularx}
\par
}

\addvspace{\floatsep}
\bigskip

{
\centering
\captionof{table}{Details of prompt $\pi_{682}$.}
\label{tab:pi682_details}
\small

\begin{tabularx}{\linewidth}{@{}X@{}}
\toprule
\textbf{Prompt:} \\
\midrule
{\ttfamily\footnotesize
\detokenize{Given that circle $C$ passes through points $P(0,-4)$, $Q(2,0)$, and $R(3,-1)$.  \n$(1)$ Find the equation of circle $C$.  \n$(2)$ If the line $l: mx+y-1=0$ intersects circle $C$ at points $A$ and $B$, and $|AB|=4$, find the value of $m$. Let's think step by step and output the final answer within \\boxed{}.}
} \\
\midrule
\textbf{Ground truth (label in DSR-sub):} $\frac{4}{3}$. \\
\bottomrule
\end{tabularx}
\par
}

\addvspace{\floatsep}
\bigskip

{
\centering
\captionof{table}{Details of prompt $\pi_{1033}$.}
\label{tab:pi1033_details}
\small

\begin{tabularx}{\linewidth}{@{}X@{}}
\toprule
\textbf{Prompt:} \\
\midrule
{\ttfamily\footnotesize
\detokenize{In $\\triangle{ABC}$ with $AB = 12$, $BC = 13$, and $AC = 15$, let $M$ be a point on $\\overline{AC}$ such that the incircles of $\\triangle{ABM}$ and $\\triangle{BCM}$ have equal radii. Then $\\frac{AM}{CM} = \\frac{p}{q}$, where $p$ and $q$ are relatively prime positive integers. Find $p + q$. Let's think step by step and output the final answer within \\boxed{}}
} \\
\midrule
\textbf{Ground truth (label in DSR-sub):} 45. \\
\bottomrule
\end{tabularx}
\par
}

\addvspace{\floatsep}
\bigskip

{
\centering
\captionof{table}{Details of prompt $\pi_{1209}$.}
\label{tab:pi1209_details}
\small

\begin{tabularx}{\linewidth}{@{}X@{}}
\toprule
\textbf{Prompt:} \\
\midrule
{\ttfamily\footnotesize
\detokenize{Define the derivative of the $(n-1)$th derivative as the $n$th derivative $(n \\in N^{*}, n \\geqslant 2)$, that is, $f^{(n)}(x)=[f^{(n-1)}(x)]'$. They are denoted as $f''(x)$, $f'''(x)$, $f^{(4)}(x)$, ..., $f^{(n)}(x)$. If $f(x) = xe^{x}$, then the $2023$rd derivative of the function $f(x)$ at the point $(0, f^{(2023)}(0))$ has a $y$-intercept on the $x$-axis of ______. Let's think step by step and output the final answer within \\boxed{}}
} \\
\midrule
\textbf{Ground truth (label in DSR-sub):} $-\frac{2023}{2024}$. \\
\bottomrule
\end{tabularx}
\par
}

\addvspace{\floatsep}
\bigskip

{
\centering
\captionof{table}{Details of prompt $p_{12}$.}
\label{tab:p12_details}
\small

\begin{tabularx}{\linewidth}{@{}X@{}}
\toprule
\textbf{Prompt:} \\
\midrule
{\ttfamily\footnotesize
\detokenize{The set of points in 3-dimensional coordinate space that lie in the plane $x+y+z=75$ whose coordinates satisfy the inequalities $x-yz<y-zx<z-xy$ forms three disjoint convex regions. Exactly one of those regions has finite area. The area of this finite region can be expressed in the form $a\\sqrt{b}$, where $a$ and $b$ are positive integers and $b$ is not divisible by the square of any prime. Find $a+b$. Let's think step by step and output the final answer within \\boxed{}}
} \\
\midrule
\textbf{Ground truth (label in AIME 2025):} $510$. \\
\bottomrule
\end{tabularx}
\par
}

\addvspace{\floatsep}
\bigskip

{
\centering
\captionof{table}{Details of prompt $p_{20}$.}
\label{tab:p20_details}
\small

\begin{tabularx}{\linewidth}{@{}X@{}}
\toprule
\textbf{Prompt:} \\
\midrule
{\ttfamily\footnotesize
\detokenize{Circle $\\omega_1$ with radius 6 centered at point $A$ is internally tangent at point $B$ to circle $\\omega_2$ with radius 15. Points $C$ and $D$ lie on $\\omega_2$ such that $\\overline{BC}$ is a diameter of $\\omega_2$ and $\\overline{BC} \\perp \\overline{AD}$. The rectangle $EFGH$ is inscribed in $\\omega_1$ such that $\\overline{EF} \\perp \\overline{BC}$, $C$ is closer to $\\overline{GH}$ than to $\\overline{EF}$, and $D$ is closer to $\\overline{FG}$ than to $\\overline{EH}$, as shown. Triangles $\\triangle DGF$ and $\\triangle CHG$ have equal areas. The area of rectangle $EFGH$ is $\\frac{m}{n}$, where $m$ and $n$ are relatively prime positive integers. Find $m + n$. Let's think step by step and output the final answer within \\boxed{}.}
} \\
\midrule
\textbf{Ground truth (label in AIME 2025):} $293$. \\
\bottomrule
\end{tabularx}
\par
}

\addvspace{\floatsep}
\bigskip

{
\centering
\captionof{table}{Details of prompt $p_{26}$.}
\label{tab:p26_details}
\small

\begin{tabularx}{\linewidth}{@{}X@{}}
\toprule
\textbf{Prompt:} \\
\midrule
{\ttfamily\footnotesize
\detokenize{Let $ A_1A_2 \\ldots A_{11} $ be an 11-sided non-convex simple polygon with the following properties:\n* The area of $ A_iA_1A_{i+1} $ is 1 for each $ 2 \\leq i \\leq 10 $,\n* $ \\cos(\\angle A_iA_1A_{i+1}) = \\frac{12}{13} $ for each $ 2 \\leq i \\leq 10 $,\n* The perimeter of $ A_1A_2 \\ldots A_{11} $ is 20.\nIf $ A_1A_2 + A_1A_{11} $ can be expressed as $ \\frac{m\\sqrt{n} - p}{q} $ for positive integers $ m, n, p, q $ with $ n $ squarefree and no prime divides all of $ m, p, q$, find $ m + n + p + q $. Let's think step by step and output the final answer within \\boxed{}.}
} \\
\midrule
\textbf{Ground truth (label in AIME 2025):} $19$. \\
\bottomrule
\end{tabularx}
\par
}

\addvspace{\floatsep}
\bigskip

{
\centering
\captionof{table}{Details of prompt $p_{29}$.}
\label{tab:p29_details}
\small

\begin{tabularx}{\linewidth}{@{}X@{}}
\toprule
\textbf{Prompt:} \\
\midrule
{\ttfamily\footnotesize
\detokenize{There are exactly three positive real numbers $ k $ such that the function\n$ f(x) = \\frac{(x - 18)(x - 72)(x - 98)(x - k)}{x} $\ndefined over the positive real numbers achieves its minimum value at exactly two positive real numbers $ x $. Find the sum of these three values of $ k $. Let's think step by step and output the final answer within \\boxed{}.}
} \\
\midrule
\textbf{Ground truth (label in AIME 2025):} $240$. \\
\bottomrule
\end{tabularx}
\par
}

\addvspace{\floatsep}
\bigskip

%% file: icml2026.bib
@misc{guo2025deepseekr1,
  author = {Daya Guo and Dejian Yang and Haowei Zhang and Junxiao Song and
            Ruoyu Zhang and Runxin Xu and Qihao Zhu and Shirong Ma and
            Peiyi Wang and Xiao Bi and others},
  title = {DeepSeek-R1: Incentivizing Reasoning Capability in LLMs via Reinforcement Learning},
  year = {2025},
  howpublished = {arXiv preprint arXiv:2501.12948}
}

@misc{team2025kimik15,
  author = {{Kimi Team} and Angang Du and Bofei Gao and Bowei Xing and
            Changjiu Jiang and Cheng Chen and Cheng Li and Chenjun Xiao and
            Chenzhuang Du and Chonghua Liao and others},
  title = {Kimi k1.5: Scaling Reinforcement Learning with LLMs},
  year = {2025},
  howpublished = {arXiv preprint arXiv:2501.12599}
}

@misc{jaech2024o1,
  author = {Aaron Jaech and Adam Kalai and Adam Lerer and Adam Richardson and
            Ahmed El-Kishky and Aiden Low and Alec Helyar and Aleksander Madry and
            Alex Beutel and Alex Carney and others},
  title = {OpenAI o1 System Card},
  year = {2024},
  howpublished = {arXiv preprint arXiv:2412.16720}
}

@misc{lambert2024tulu3,
  author = {Nathan Lambert and Jacob Morrison and Valentina Pyatkin and
            Shengyi Huang and Hamish Ivison and Faeze Brahman and
            Lester James V Miranda and Alisa Liu and Nouha Dziri and
            Shane Lyu and others},
  title = {{T{\"U}LU}~3: Pushing Frontiers in Open Language Model Post-Training},
  year = {2024},
  howpublished = {arXiv preprint arXiv:2411.15124}
}

@misc{gao2024rewardreasoning,
  author = {Jiaxuan Gao and Shusheng Xu and Wenjie Ye and Weilin Liu and
            Chuyi He and Wei Fu and Zhiyu Mei and Guangju Wang and Yi Wu},
  title = {On Designing Effective RL Reward at Training Time for LLM Reasoning},
  year = {2024},
  howpublished = {arXiv preprint arXiv:2410.15115}
}

@inproceedings{miao2024inform,
  author    = {Yuchun Miao and Sen Zhang and Liang Ding and Rong Bao and
               Lefei Zhang and Dacheng Tao},
  title     = {{InfoRM}: Mitigating Reward Hacking in {RLHF} via Information-Theoretic Reward Modeling},
  booktitle = {Advances in Neural Information Processing Systems},
  year      = {2024}
}

@misc{cai2025verifiablenoisy,
  author = {Xin-Qiang Cai and Wei Wang and Feng Liu and Tongliang Liu and
            Gang Niu and Masashi Sugiyama},
  title = {Reinforcement Learning with Verifiable yet Noisy Rewards under Imperfect Verifiers},
  year = {2025},
  howpublished = {arXiv preprint arXiv:2510.00915}
}

@inproceedings{li2025processqvalue,
  author    = {Wendi Li and Yixuan Li},
  title     = {Process Reward Model with Q-Value Rankings},
  booktitle = {Proceedings of the Thirteenth International Conference on Learning Representations},
  year      = {2025}
}

@misc{zhang2025processlessons,
  author = {Zhenru Zhang and Chujie Zheng and Yangzhen Wu and Beichen Zhang and
            Runji Lin and Bowen Yu and Dayiheng Liu and Jingren Zhou and Junyang Lin},
  title = {The Lessons of Developing Process Reward Models in Mathematical Reasoning},
  year = {2025},
  howpublished = {arXiv preprint arXiv:2501.07301}
}

@misc{li2024numinamath,
  author = {Jia Li and Edward Beeching and Lewis Tunstall and Ben Lipkin and
            Roman Soletskyi and Shengyi Huang and Kashif Rasul and Longhui Yu and
            Albert Jiang and Ziju Shen and Zihan Qin and Bin Dong and Li Zhou and
            Yann Fleureau and Guillaume Lample and Stanislas Polu},
  title = {NuminaMath},
  year = {2024},
  howpublished = {Technical report and dataset, available at
                  https://huggingface.co/AI-MO/NuminaMath-CoT}
}

@misc{yu2025dapo,
  author = {Qiying Yu and Zheng Zhang and Ruofei Zhu and Yufeng Yuan and
            Xiaochen Zuo and Yu Yue and Tiantian Fan and Gaohong Liu and
            Lingjun Liu and Xin Liu and others},
  title = {DAPO: An Open-Source LLM Reinforcement Learning System at Scale},
  year = {2025},
  howpublished = {arXiv preprint arXiv:2503.14476}
}

@misc{li2025limr,
  author = {Xuefeng Li and Haoyang Zou and Pengfei Liu},
  title = {LIMR: Less Is More for RL Scaling},
  year = {2025},
  howpublished = {arXiv preprint arXiv:2502.11886}
}

@misc{wang2025onetrainingexample,
  author = {Yiping Wang and Qing Yang and Zhiyuan Zeng and Liliang Ren and
            Liyuan Liu and Baolin Peng and Hao Cheng and Xuehai He and
            Kuan Wang and Jianfeng Gao and Weizhu Chen and Shuohang Wang and
            Simon Shaolei Du and Yelong Shen},
  title = {Reinforcement Learning for Reasoning in Large Language Models with One Training Example},
  year = {2025},
  howpublished = {arXiv preprint arXiv:2504.20571}
}

@misc{zhu2025negativeRLVR,
  author = {Xinyu Zhu and Mengzhou Xia and Zhepei Wei and Wei-Lin Chen and Danqi Chen and Yu Meng},
  title = {The Surprising Effectiveness of Negative Reinforcement in LLM Reasoning},
  year = {2025},
  howpublished = {arXiv preprint arXiv:2506.01347}
}

@misc{yang2025negativesampleaugmentation,
  author = {Zhaohui Yang and Yuxiao Ye and Shilei Jiang and Chen Hu and Linjing Li and Shihong Deng and Daxin Jiang},
  title = {Unearthing Gems from Stones: Policy Optimization with Negative Sample Augmentation for {LLM} Reasoning},
  year = {2025},
  howpublished = {arXiv preprint arXiv:2505.14403}
}

@misc{chen2025sgpo,
  author = {Peter Chen and Xiaopeng Li and Ziniu Li and Xi Chen and Tianyi Lin},
  title = {Stepwise Guided Policy Optimization: Coloring your Incorrect Reasoning in {GRPO}},
  year = {2025},
  howpublished = {arXiv preprint arXiv:2505.11595}
}

@misc{feng2025dontwastemistakes,
  author = {Yunzhen Feng and Parag Jain and Anthony Hartshorn and Yaqi Duan and Julia Kempe},
  title = {Don't Waste Mistakes: Leveraging Negative {RL}-Groups via Confidence Reweighting},
  year = {2025},
  howpublished = {arXiv preprint arXiv:2510.08696}
}

@misc{arnal2025asymmetricreinforce,
  author = {Charles Arnal and Ga{\"e}tan Narozniak and Vivien Cabannes and Yunhao Tang and Julia Kempe and R{\'e}mi Munos},
  title = {Asymmetric {REINFORCE} for off-Policy Reinforcement Learning: Balancing Positive and Negative Rewards},
  year = {2025},
  howpublished = {arXiv preprint arXiv:2506.20520}
}

@misc{schulman2017ppo,
  author = {John Schulman and Filip Wolski and Prafulla Dhariwal and
            Alec Radford and Oleg Klimov},
  title = {Proximal Policy Optimization Algorithms},
  year = {2017},
  howpublished = {arXiv preprint arXiv:1707.06347}
}

@misc{kazemnejad2024vineppo,
  author = {Amirhossein Kazemnejad and Milad Aghajohari and Eva Portelance and
            Alessandro Sordoni and Siva Reddy and Aaron Courville and Nicolas Le Roux},
  title = {VinePPO: Unlocking RL Potential for LLM Reasoning through Refined Credit Assignment},
  year = {2024},
  howpublished = {arXiv preprint arXiv:2410.01679}
}

@misc{yuan2025ppocollapse,
  author = {Yufeng Yuan and Yu Yue and Ruofei Zhu and Tiantian Fan and Lin Yan},
  title = {What’s Behind PPO’s Collapse in Long-CoT? Value Optimization Holds the Secret},
  year = {2025},
  howpublished = {arXiv preprint arXiv:2503.01491}
}

@misc{yuan2025vapo,
  author = {Yufeng Yuan and Qiying Yu and Xiaochen Zuo and Ruofei Zhu and
            Wenyuan Xu and Jiaze Chen and Chengyi Wang and Tiantian Fan and
            Zhengyin Du and Xiangpeng Wei and others},
  title = {VAPO: Efficient and Reliable Reinforcement Learning for Advanced Reasoning Tasks},
  year = {2025},
  howpublished = {arXiv preprint arXiv:2504.05118}
}

@misc{li2025turnppo,
  author = {Junbo Li and Peng Zhou and Rui Meng and Meet P. Vadera and
            Lihong Li and Yang Li},
  title = {Turn-PPO: Turn-Level Advantage Estimation with PPO for Improved Multi-Turn RL in Agentic LLMs},
  year = {2025},
  howpublished = {arXiv preprint arXiv:2512.17008}
}

@misc{liu2025understandingr1zero,
  author = {Zichen Liu and Changyu Chen and Wenjun Li and Penghui Qi and
            Tianyu Pang and Chao Du and Wee Sun Lee and Min Lin},
  title = {Understanding R1-Zero-like Training: A Critical Perspective},
  year = {2025},
  howpublished = {arXiv preprint arXiv:2503.20783}
}

@misc{zhang2025srpo,
  author = {Xiaojing Zhang and Jinghui Wang and Zifei Cheng and Wenhao Zhuang and
            Zheng Lin and Minglei Zhang and Shaojie Wang and Yinghan Cui and
            Chao Wang and Junyi Peng and Shimiao Jiang and Shiqi Kuang and
            Shouyu Yin and Chaohang Wen and Haotian Zhang and Bin Chen and Bing Yu},
  title = {SRPO: A Cross-Domain Implementation of Large-Scale Reinforcement Learning on LLM},
  year = {2025},
  howpublished = {arXiv preprint arXiv:2504.14286}
}

@misc{shao2024deepseekmath,
  author       = {Zhihong Shao and Peiyi Wang and Qihao Zhu and Runxin Xu and Junxiao Song and Xiao Bi and Haowei Zhang and Mingchuan Zhang and Y. K. Li and Y. Wu and Daya Guo},
  title        = {DeepSeekMath: Pushing the Limits of Mathematical Reasoning in Open Language Models},
  year         = {2024},
  howpublished = {arXiv preprint arXiv:2402.03300}
}

@misc{primeintellect2025intellect2,
  author       = {{Prime Intellect Team} and Sami Jaghouar and Justus Mattern and Jack Min Ong and Jannik Straube and Manveer Basra and Aaron Pazdera and Kushal Thaman and Matthew Di Ferrante and Felix Gabriel and Fares Obeid and Kemal Erdem and Michael Keiblinger and Johannes Hagemann},
  title        = {INTELLECT-2: A Reasoning Model Trained Through Globally Decentralized Reinforcement Learning},
  year         = {2025},
  howpublished = {arXiv preprint arXiv:2505.07291}
}

@misc{xu2025elasticreasoning,
  author       = {Yuhui Xu and Hanze Dong and Lei Wang and Doyen Sahoo and Junnan Li and Caiming Xiong},
  title        = {Scalable Chain of Thoughts via Elastic Reasoning},
  year         = {2025},
  howpublished = {arXiv preprint arXiv:2505.05315}
}

@misc{wei2025truthrl,
  author       = {Zhepei Wei and Xiao Yang and Kai Sun and Jiaqi Wang and Rulin Shao and Sean Chen and Mohammad Kachuee and Teja Gollapudi and Tony Liao and Nicolas Scheffer and Rakesh Wanga and Anuj Kumar and Yu Meng and Wen-tau Yih and Xin Luna Dong},
  title        = {TruthRL: Incentivizing Truthful LLMs via Reinforcement Learning},
  year         = {2025},
  howpublished = {arXiv preprint arXiv:2509.25760}
}

@article{ivison2025largescale,
  title   = {Large-Scale Data Selection for Instruction Tuning},
  author  = {Ivison, Hamish and Zhang, Muru and Brahman, Faeze and Koh, Pang Wei and Dasigi, Pradeep},
  journal = {arXiv preprint arXiv:2503.01807},
  year    = {2025}
}

@inproceedings{chen2024alpagasus,
  title     = {AlpacaGAsus: Training a Better Alpaca with Fewer Data},
  author    = {Chen, Lichang and Li, Shiyang and Yan, Jun and Wang, Hai and Gunaratna, Kalpa and Yadav, Vikas and Tang, Zheng and Srinivasan, Vijay and Zhou, Tianyi and Huang, Heng and Jin, Hongxia},
  booktitle = {International Conference on Learning Representations},
  year      = {2024}
}

@inproceedings{ivison2023dataefficient,
  title     = {Data-Efficient Finetuning Using Cross-Task Nearest Neighbors},
  author    = {Ivison, Hamish and Smith, Noah A. and Hajishirzi, Hannaneh and Dasigi, Pradeep},
  booktitle = {Findings of the Association for Computational Linguistics},
  year      = {2023}
}

@inproceedings{xia2024less,
  title     = {LESS: Selecting Influential Data for Targeted Instruction Tuning},
  author    = {Xia, Mengzhou and Malladi, Sadhika and Gururangan, Suchin and Arora, Sanjeev and Chen, Danqi},
  booktitle = {International Conference on Machine Learning},
  year      = {2024}
}

@article{liu2024enabling,
  title   = {Enabling Weak LLMs to Judge Response Reliability via Meta Ranking},
  author  = {Liu, Zijun and Kou, Boqun and Li, Peng and Yan, Ming and Zhang, Ji and Huang, Fei and Liu, Yang},
  journal = {arXiv preprint arXiv:2402.12146},
  year    = {2024}
}

@article{das2024active,
  title   = {Active Preference Optimization for Sample-Efficient RLHF},
  author  = {Das, Nirjhar and Chakraborty, Souradip and Pacchiano, Aldo and Chowdhury, Sayak Ray},
  journal = {arXiv preprint arXiv:2402.10500},
  year    = {2025}
}

@inproceedings{muldrew2024active,
  title     = {Active Preference Learning for Large Language Models},
  author    = {Muldrew, William and Hayes, Peter and Zhang, Mingtian and Barber, David},
  booktitle = {International Conference on Machine Learning},
  year      = {2024}
}

@misc{luo2025deepscaler,
  author       = {Luo, Michael and Tan, Sijun and Wong, Justin and Shi, Xiaoxiang and Tang, William Y. and Roongta, Manan and Cai, Colin and Luo, Jeffrey and Li, Li Erran and Popa, Raluca Ada and Stoica, Ion},
  title        = {DeepScaler: Surpassing o1-preview with a 1.5B Model by Scaling RL},
  year         = {2025},
  howpublished = {Notion Blog},
  url          = {https://pretty-radio-b75.notion.site/DeepScaleR-Surpassing-O1-Preview-with-a-1-5B-Model-by-Scaling-RL-19681902c1468005bed8ca303013a4e2}
}

@misc{aops2025aime,
  author       = {{Art of Problem Solving}},
  title        = {AIME Problems and Solutions},
  year         = {2025},
  howpublished = {\url{https://artofproblemsolving.com/wiki/index.php/AIME_Problems_and_Solutions}},
  note         = {Accessed: 2025-04-20}
}

@misc{aops_amc,
  author       = {{Art of Problem Solving}},
  title        = {{AMC Problems and Solutions}},
  howpublished = {\url{https://artofproblemsolving.com/wiki/index.php?title=AMC_Problems_and_Solutions}},
  year         = {2025},
  note         = {Accessed: 2025-04-20}
}

@misc{lightman2023verify,
  author       = {Lightman, Hunter and Kosaraju, Vineet and Burda, Yura and Edwards, Harri and Baker, Bowen and Lee, Teddy and Leike, Jan and Schulman, John and Sutskever, Ilya and Cobbe, Karl},
  title        = {Let’s Verify Step by Step},
  year         = {2023},
  howpublished = {arXiv preprint arXiv:2305.20050}
}

@misc{sheng2024hybridflow,
  author       = {Sheng, Guangming and Zhang, Chi and Ye, Zilingfeng and Wu, Xibin and Zhang, Wang and Zhang, Ru and Peng, Yanghua and Lin, Haibin and Wu, Chuan},
  title        = {HybridFlow: A Flexible and Efficient {RLHF} Framework},
  year         = {2024},
  howpublished = {arXiv preprint arXiv:2409.19256}
}

@misc{chen2021evaluating,
  author       = {Chen, Mark and Tworek, Jerry and Jun, Heewoo and Yuan, Qiming and Pinto, Henrique Pond{\'e} de Oliveira and Kaplan, Jared and Edwards, Harri and Burda, Yuri and Joseph, Nicholas and Brockman, Greg and Ray, Alex and Puri, Raul and Krueger, Gretchen and Petrov, Michael and Khlaaf, Heidy and Sastry, Girish and Mishkin, Pamela and Chan, Brooke and Gray, Scott and Ryder, Nick and Pavlov, Mikhail and Power, Alethea and Kaiser, Lukasz and Bavarian, Mohammad and Winter, Clemens and Tillet, Philippe and Petroski Such, Felipe and Cummings, Dave and Plappert, Matthias and Chantzís, Fotios and Barnes, Elizabeth and Herbert-Voss, Ariel and Guss, William Hebgen and Nichol, Alex and Paino, Alex and Tezak, Nikolas and Tang, Jie and Babuschkin, Igor and Balaji, Suchir and Jain, Shantanu and Saunders, William and Hesse, Christopher and Carr, Andrew N. and Leike, Jan and Achiam, Joshua and Misra, Vedant and Morikawa, Evan and Radford, Alec and Knight, Matthew and Brundage, Miles and Murati, Mira and Mayer, Katie and Welinder, Peter and McGrew, Bob and Amodei, Dario and McCandlish, Sam and Sutskever, Ilya and Zaremba, Wojciech},
  title        = {Evaluating Large Language Models Trained on Code},
  year         = {2021},
  howpublished = {arXiv preprint arXiv:2107.03374}
}
